\documentclass{article}

\usepackage{arxiv_upload}

\usepackage{amsmath}
\usepackage{amsthm}
\usepackage{graphicx}
\usepackage[utf8]{inputenc} % allow utf-8 input
\usepackage[T1]{fontenc}    % use 8-bit T1 fonts
\usepackage{hyperref}       % hyperlinks
\usepackage{url}            % simple URL typesetting
\usepackage{booktabs}       % professional-quality tables
\usepackage{amsfonts}       % blackboard math symbols
\usepackage{nicefrac}       % compact symbols for 1/2, etc.
\usepackage{microtype}      % microtypography
\usepackage{xcolor}         % colors

\newtheorem{theorem}{Theorem}[section]
\newtheorem{proposition}[theorem]{Proposition}
\newtheorem{lemma}[theorem]{Lemma}

\newtheorem{definition}[theorem]{Definition}
\newtheorem{assumption}[theorem]{Assumption}
\newtheorem{remark}[theorem]{Remark}

\title{When Does In-Context Search Help? A Sampling-Complexity Theory of Reflection-Driven Reasoning}

\author{Yotam Wolf, Noam Wies, Amnon Shashua\\The Hebrew University\\\{yotamwolf,noam.wies,shashua\}@cs.huji.ac.il
}

\begin{document}

\maketitle

\begin{abstract}
Training large language models (LLMs) with extended reasoning has enabled \textit{in-context search}, in which models iteratively generate, critique, and revise solution attempts. We provide a theoretical analysis of in-context search by modeling it as approximate inference over reasoning traces, where the base model defines a prior and self-reflection provides feedback for posterior updates, and study the resulting inference-time sampling complexity - the number of sequential attempts needed to achieve high success probability. We show that when reflections reliably localize early mistakes, in-context search can yield exponential improvements over the base model, solving problems with exponentially small zero-shot pass rates using only a polynomial number of sequential attempts, whereas when this property fails, conditioning on past attempts offers no asymptotic benefit over parallel sampling. We further show that these gains are robust and learnable: approximate posterior updates suffice, and cross-entropy training on search rollouts recovers the required behavior with polynomial sample complexity. Finally, we show that under a stagewise abstraction of reinforcement learning with verifiable rewards, the optimal policy extension implements the same posterior reweighting rule. We validate key qualitative predictions of the theory on real large reasoning models.
\end{abstract}

\section{Introduction}

Chain-of-thought (CoT) reasoning represents an LLM's intermediate reasoning steps as a sequence connecting the input to the final answer, and was originally used to decompose complex tasks into more manageable subtasks \citep{wei2022chain}. Subsequent methods such as tree of thought, self-consistency, and reflection extended this paradigm by exploring multiple reasoning paths, evaluating alternatives, and iteratively refining conclusions \citep{yao2023tree,wang2022self,shinn2023reflexion}, leading to the emergence of large reasoning models (LRMs) that solve problems through sequential revisions and multiple solution attempts, a process known as ``in-context search'' \citep{jaech2024openai,chen2021evaluating,li2022competition,AlphaCode2T,ridnik2024code}. This paradigm has enabled high accuracy on reasoning problems under single-sample inference, outperforming earlier LLMs that relied on extensive multi-sample aggregation \citep{chollet2024openai}, and has been prominently observed in models trained with reinforcement learning with verifiable rewards (RLVR), such as DeepSeek-R1 \citep{guo2025deepseek} and Kimi-k1.5 \citep{team2025kimi}, where search-like reasoning within the context window emerges from training. It has since motivated a growing body of work on reinforcement-based reasoning and search behavior \citep{yeo2025demystifying,wang2025information,liu2025attention,yu2025demystifying}.

Despite these successes, it remains unclear when in-context search provides genuine advantages over parallel sampling from a base model, and when it does not. We study this question through the lens of inference-time sampling complexity: how many sequential attempts (or backtracks) are required to achieve high success probability, compared to parallel sampling from the base model. The recent theoretical work of \cite{shalev2025reasoning} demonstrated exponential sample efficiency using an explicit tree-search procedure that prunes failed branches by removing them from context. In contrast, modern LRMs typically retain the full history of attempts in-context, raising the question of when this unstructured form can match the same guarantees.

In this work, we address this question by theoretically characterizing the search process underlying in-context search. Building on the search-based perspective of \cite{shalev2025reasoning}, we model reasoning as inference over reasoning traces, focusing on the regime relevant to modern LRMs, where the full history of attempts - including failures - remains in the context. Our framework (Section \ref{sec:framework}) formalizes in-context search as an inference process with three components: (i) a prior over reasoning traces induced by the base model, (ii) a reflection mechanism that localizes errors in attempted solutions, and (iii) posterior updates that reweight candidate continuations based on the accumulated reflection feedback. See illustration in figure \ref{fig:heuristic}.

Using this framework, we identify reflection
as the key determinant of inference-time sampling complexity. Under local updates, negative feedback helps only when it identifies prefixes whose continuations should be suppressed. If reflections localize errors only at late stages, in-context search offers no asymptotic improvement over independent sampling and can even degrade performance (Propositions~\ref{res:late_reflections} and~\ref{res:positive_reflections}). In contrast, when reflections reliably identify early errors, in-context search can achieve exponential gains: problems whose base-model pass rate decays exponentially with reasoning depth can be solved with arbitrarily high probability using only polynomially many sequential attempts (Theorem~\ref{result:inference_backtracks}).

We further show that this posterior-update behavior is robust and learnable. Approximate step-wise updates suffice to preserve the inference-time improvement, and standard cross-entropy training on efficient search rollouts yields polynomial sample complexity. We also analyze a stagewise abstraction of RLVR and show that, under noisy but mostly correct reflection feedback, the optimal extension of the policy - corresponding to generating additional reasoning attempts conditioned on past attempts and reflections - implements the same posterior reweighting, providing a mechanism for the emergence of search-like reasoning in practice.

Finally, in Section~\ref{sec:experiments}, we test qualitative predictions of the theory. First, we evaluate the reflection assumption on synthetic reasoning traces with injected errors, measuring whether a model can localize the earliest incorrect step. Second, on AIME 2025, we study real LRMs through prefix-conditioned pass rates, an observable proxy for posterior mass on correct solutions. Successful trajectories generally increase in downstream pass rate, unsuccessful trajectories exhibit low and oscillatory behavior, and self-correction segments produce both negative and large positive changes, consistent with inference-like redistribution rather than monotone accumulation of useful computation.

To recap our contributions:
\begin{itemize}
    \item We characterize when reflection-driven inference reduces sampling complexity from exponential (parallel sampling) to polynomial (sequential attempts), and when it does not.
    \item We show that the posterior update rule underlying efficient in-context search is both learnable from rollouts with polynomial sample complexity and arises as the optimal policy in a stagewise extension setting, providing a mechanism for its emergence under RLVR-like training.
\end{itemize}

\begin{figure}[h]
    \centering
    \includegraphics[width=0.7\linewidth]{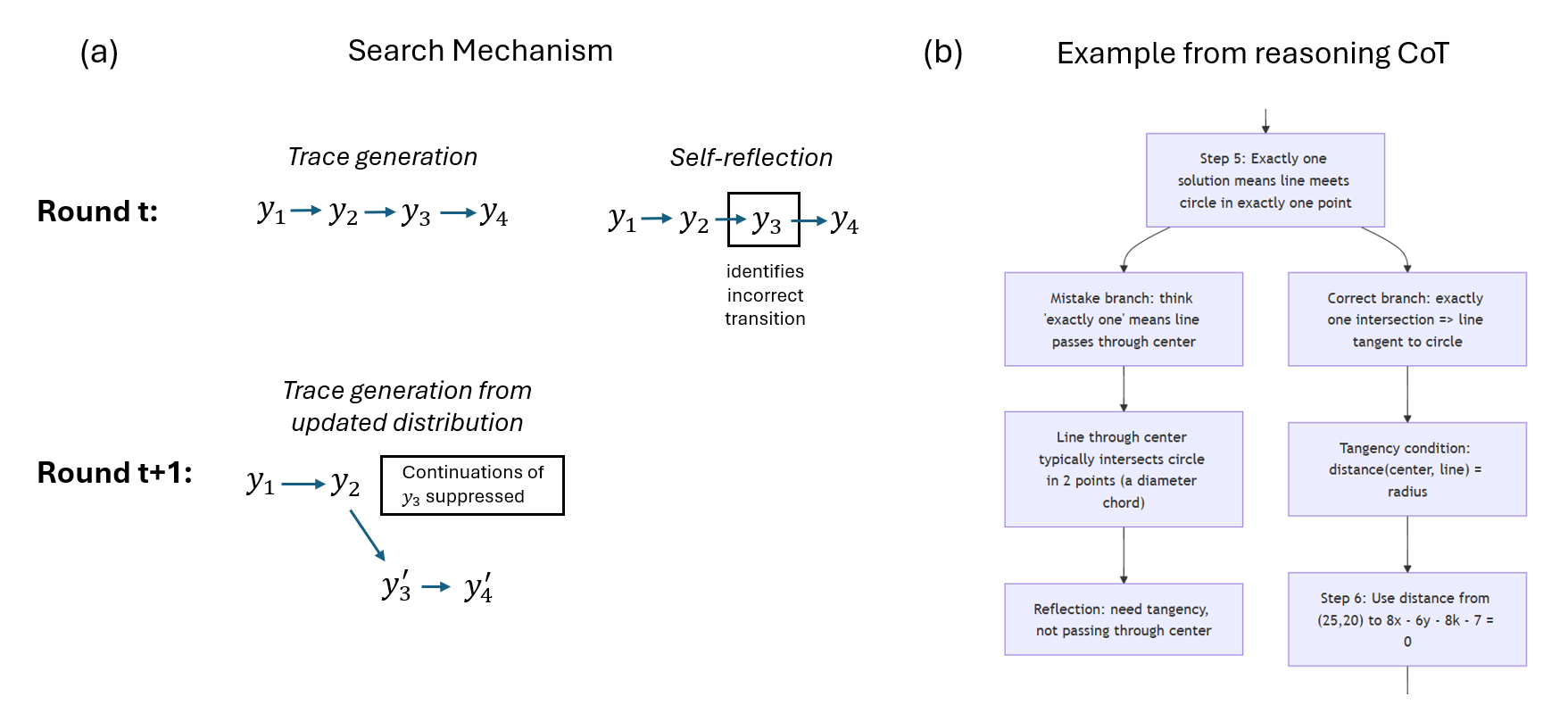}
    \caption{(a) Visualization of search heuristic. On the first attempt the model samples a path based on its prior likelihood. At the end of the attempt, receives a reflection indicating the last node is incorrect. A posterior update takes place, and on the second attempt the model samples accordingly. (b) Example of subtree in a real LRM-generated CoT.}
    \label{fig:heuristic}
\end{figure}

\section{Related Work}

\paragraph{LLMs empowered by CoT:} Prior theoretical work on chain-of-thought has focused on expressivity and learnability, in terms of computational hardness of the problems, without consideration of the self-supervised search process performed by models in practice \citep{ahuja2024provable,wies2022sub,merrill2023expresssive,malach2023auto,peng2024limitations,xu2024large}. More recent work studies CoT from a search perspective, modeling reasoning as traversal over a tree of possible continuations with an explicit search mechanism \citep{shalev2025reasoning,xia2025rethinking}. In contrast, we focus on the regime relevant to modern LRMs, where failed attempts are retained in context and search proceeds through implicit updates rather than explicit pruning, and characterize the resulting inference-time efficiency of such processes.

\paragraph{Using LLMs as priors:} Several works treat LLMs as probabilistic priors for reasoning, performing explicit Bayesian inference or analyzing their probabilistic reasoning capabilities \citep{qiu2025bayesian,zhang2025beyond,pournemat2025reasoning}. Closely related is \cite{karan2025reasoning}, which performs Markov chain Monte Carlo sampling over LLM-induced distributions. We adopt a similar perspective, viewing reasoning as inference over a base-model prior, but focus on the inference-time sample complexity of such procedures. In contrast to prior work, we characterize when sequential conditioning yields exponential improvements over parallel sampling.

\paragraph{In-context search:} Recent empirical work has studied how to induce and improve search behavior in LLMs, focusing on training procedures, exploration strategies, and test-time methods for multi-step reasoning \citep{yeo2025demystifying,snell2024scaling,singhimproving,wang2025information,kim2025meta,liu2025attention}. In contrast, these works do not characterize when such search is provably more efficient than parallel sampling. A central component of our analysis is self-reflection, which we model as noisy feedback that localizes errors and drives inference updates; our theory characterizes when such feedback is sufficient to yield efficient inference-time search. This mechanism can be further enhanced through improved self-awareness \citep{kim2025meta} or integration with external tools \citep{yu2025demystifying}.

\section{In-Context Search as Sequential Inference}
\label{sec:framework}

We study when sequential reasoning - where a model generates multiple attempts conditioned on past failures - can outperform independent sampling from a base language model. We model this as an inference process over reasoning traces: (i) the base model defines a prior over possible solutions, (ii) reflection provides information about errors in sampled trajectories, (iii) conditioning on this feedback induces an implicit posterior update.

The search space is motivated by the work of \cite{shalev2025reasoning}, where reasoning is represented at the level of semantic steps rather than tokens, and treats prefixes of reasoning traces as the states of the process. 
While their work represents this space as a tree and performs explicit search by pruning failed branches, we focus on the regime relevant to modern LRMs, where failed attempts are retained in context and influence future generation, inducing inference through conditioning.

\subsection{Base Model as a Prior Over Reasoning Traces}

We model the base language model as an autoregressive distribution $P_\theta(\cdot \mid x)$ over reasoning traces. A \emph{reasoning trace} is a sequence 
\[
y = (y_1, \dots, y_n),
\]
corresponding to a single solution attempt, where each $y_i$ denotes a reasoning step, and
\[
P_\theta(y \mid x) = \prod_{i=1}^n P_\theta(y_i \mid x,y_{<i}).
\]

We treat prefixes of reasoning traces as the states of the inference process. For a trace $y$, let
\[
h_i = y_{1:i}
\]
denote the prefix up to step $i$, with $h_0$ the empty prefix. Generation corresponds to iteratively extending a prefix $h_i$ by sampling the next step $y_{i+1} \sim P_\theta(\cdot \mid x,h_i)$.

For notational convenience, these prefixes can be organized into a rooted tree $T_x = (V,E)$, where each node $v \in V$ is a prefix and an edge $(v,v')$ exists if $v'$ extends $v$ by one step with nonzero probability under $P_\theta(\cdot \mid x)$. A complete reasoning trace corresponds to a root-to-leaf path. We assume depth at most $n$ and branching factor at most $W$.

To ensure that correct solutions are reachable under the base model - so that inference is non-trivial - we impose a local support condition.

\begin{assumption}[$\gamma$-local support]\label{assumption:universality}
Let $h$ be any prefix that lies on a correct reasoning trace. Then under
$P_\theta(\cdot \mid x,h)$, the probability of sampling a next step that
extends $h$ along a correct reasoning trace is at least $\gamma > 0$.
\end{assumption}

This is a one-step support condition: the model assigns non-negligible probability to a correct next transition.
We note this assumption is mild as the dependence on $\gamma$ will be logarithmic, and can be well below any standard sampling scheme.
Under this view, the base model defines a prior over reasoning traces, and independent sampling corresponds to drawing trajectories from $P_\theta(\cdot \mid x)$, consistent with the importance of the base model for search success \cite{yue2025does}.

The performance of this baseline is given by the pass rate $\Pr_{y \sim P_\theta(\cdot \mid x)}[y \in \mathcal{Y}_{\mathrm{correct}}(x)]$, which in the worst case decays exponentially with the depth $n$. In particular, if at each step the probability of selecting a correct continuation is at most $p<1$, then the full trajectory is at most $p^n=O(\exp(-n))$.

\subsection{Reflection as a Source of Information}

For an iterative sampling procedure to beat the efficiency of sampling from the prior, an inference process must be performed over failed attempts. But simply ruling out an incorrect reasoning trace does not narrow the search space significantly. In contrast, ruling out a prefix prunes all of its continuations and narrows the search space significantly. In our framework such information is obtained through self-reflection, where the model observes the full context and the latest failed attempt, and estimates the source of the failure. We model the process in rounds $t=1,2,...$ of sequential attempts, with $\mathcal{C}_t$ denoting the in-context history.

\begin{definition}[Adaptive reflection signal]
\label{def:reflection}
Let \(\mathcal{C}_{t-1}\) denote the history up to round \(t-1\). At round \(t\),
suppose the sampled reasoning trace \(\omega_t=(y_1,\dots,y_n)\) is incorrect. Let
\(k\) denote the first invalid transition along the trace: the smallest
index such that the prefix \(y_{1:k}\) does not lie on any correct reasoning
trace.

A reflection signal identifies a transition
$(h_{t},a_{t})=
(y_{<r_t}, y_{r_t})
$
as invalid. Define
\begin{equation}
Z_t=\mathbf{1}\{r_t=k\}.
\end{equation}
We assume that for some \(p_r\in(0,1]\) and \(m_r \ge 0\), if the earliest incorrect prefix
\(y_{1:k}\) was used in at least \(m_r\) previous failed attempts in
\(\mathcal{C}_{t-1}\), then
\begin{equation}
\mathbb{E}[Z_t\mid \mathcal{C}_{t-1}]\ge p_r.
\end{equation}
When \(r_t\neq k\), the identified transition is a later invalid transition on the same trace.
\end{definition}

Informally, this assumption requires that once the model encounters the same failure mode multiple times, it can identify the earliest incorrect step with non-negligible probability.
The reflection performs a diagnostic task - identifying where a reasoning trace becomes incorrect - rather than constructing a correct solution from an exponentially large set of candidates. The probability of identifying the earliest incorrect step \(p_r\) need not be large, and we require only that this signal becomes reliable after repeated exposure to the same failure mode, captured by the threshold \(m_r\).

In our framework, the model's new attempts can restart from previously visited states, as in practice. Accordingly, reflection can be applied to the full reasoning trace, including both reused prefixes and newly generated steps. This is important: if reflection were restricted to only the newly generated suffix, the model could repeatedly restart from an already invalid prefix and enter a reasoning loop.
Appendix~\ref{sec:loops} provides empirical evidence for the necessity of this assumption.

\begin{remark}[Delayed reflection and false positives]
\label{remark:delayed_reflection}
Our analysis extends to settings where the reflection may misidentify a correct transition as incorrect, or identify it with a bounded delay $\Delta_r$; see Appendices \ref{sec:false_positives} and \ref{sec:delayed_reflections}.
\end{remark}

\subsection{Posterior Updates via Conditioning}

Reflection provides information about which reasoning steps are unlikely to lead to correct solutions. For this information to be useful, it must influence future reasoning attempts. In our framework, this is modeled as a progressive in-context downweighting of transitions identified as incorrect.

\begin{definition}[In-context reweighting]
\label{def:in-context_learning}
Let $\mathcal C_t$ denote the in-context history, consisting of previous attempts and reflection events
that identify transitions $(h_r, a_r)$ as invalid.
For a prefix $h$ and candidate next step $a$, define
\begin{equation}
n(h,a;\mathcal C_t)
=
\sum_{(h_r,a_r)\in\mathcal C_t}\mathbf 1\{(h_r,a_r) = (h,a)\},
\end{equation}
the number of times the transition $(h,a)$ has been identified as invalid.
Let $\ell_0(a \mid x, h)$ denote the base logit of generating step $a$ after prefix $h$ given problem $x$. The context-conditioned score is
\begin{equation}
\ell(a \mid x, \mathcal C_t, h)
=
\ell_0(a \mid x, h) - \eta\, n(h,a;\mathcal C_t),
\end{equation}
where $\eta > 0$ is the update rate. Transitions not reflected remain unaffected. 
\end{definition}

Each reflection reduces a transition's relative likelihood by a factor $e^{-\eta}$, so repeated feedback leads to exponential suppression of incorrect prefixes.
In practice, this update is implemented implicitly via conditioning on the in-context history $\mathcal{C}_t$. In Section~\ref{sec:learning}, we show that approximate updates suffice.

\paragraph{Sequential inference.}
We model in-context search as a sequential procedure over rounds $t = 1, 2, \dots$. At each round, a trace $\omega_t$ is sampled from the context-conditioned distribution given the context $\mathcal{C}_{t-1}$. If incorrect, the reflection signal identifies a transition $(h_{t},a_{t})=(y_{<r_t},y_{r_t})$ indexed by $r_t$; both the trace and the identified transition are appended to the context. The process terminates if a correct trajectory is sampled.

This induces a posterior-like update: transitions identified as incorrect are progressively downweighted, concentrating probability on viable reasoning paths.

\paragraph{Reachability.}
At each round, the model selects a prefix from which to continue generation, either starting from the empty prefix or resuming from a prefix present in the context. In all cases, generation is conditioned on the full context.

\begin{assumption}[Contextual reachability]\label{assumption:reachability}
At round $t$, the model selects a restart prefix $h$ equal to either the empty
prefix or a prefix appearing in $\mathcal C_{t-1}$, and generates a continuation
according to the context-conditioned distribution
$P_\theta(\cdot \mid x,\mathcal C_{t-1},h)$.
Resumption is consistent with reflected feedback: if a transition $(h_r,a_r)$ has
been identified as invalid in $\mathcal C_{t-1}$, then no future attempt selects
as its restart prefix any $h$ that extends $(h_r,a_r)$.
\end{assumption}

The contextual reachability assumption rules out pathological behaviors in which the model repeatedly resumes from prefixes already identified as incorrect. Without this restriction, the model could cycle indefinitely.
Appendix~\ref{sec:loops} provides empirical evidence for the necessity of this assumption.

\begin{definition}[In-context search protocol]
\label{def:ics_protocol}
The inference procedure consists of:
\begin{enumerate}
\item sampling trajectories conditioned on the current context,
\item applying adaptive reflection,
\item updating reasoning step logits according to Definition~\ref{def:in-context_learning}.
\end{enumerate}
\end{definition}

\section{When Does In-Context Search Help?}
\label{sec:theory}

\subsection{Failure Modes}

We begin by characterizing regimes in which in-context search does \emph{not} provide an advantage over independent sampling. These results highlight the necessity of informative reflection and clarify the role of early error localization.

\paragraph{Late Error Localization}

A central requirement for efficient in-context search is the ability to identify errors \emph{early} in a reasoning trajectory. Intuitively, the benefit of reflection arises from eliminating large portions of the search space. If reflection only identifies mistakes at late stages, then only a small portion of it is narrowed down.
This intuition is formalized in the following result.

\begin{proposition}\label{res:late_reflections}
(Late reflection is ineffective)
Consider the in-context search protocol of Definition~\ref{def:ics_protocol} with logit updates as in Definition~\ref{def:in-context_learning}. Suppose that:

\begin{enumerate}
\item reflections are restricted to identifying incorrect steps at positions strictly greater than $n/2$ until all such steps have been eliminated;
\item every incorrect prefix of length $d<n/2$ admits at least two possible next steps;
\item $p_1<1$ is the initial probability that the model selects a correct first reasoning step.
\end{enumerate}

Then for any round $t < 2^{n/2}$, the probability of selecting a correct first reasoning step at round $t$ remains equal to its initial value $p_1$. In particular, it remains at most $p_1$ even after $2^{n/2}$ failed attempts.
\end{proposition}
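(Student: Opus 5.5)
The plan is to track which logits the updates of Definition~\ref{def:in-context_learning} can actually touch. The probability of selecting a correct first step at round $t$ is
\[
p_1(t)=\frac{\sum_{a\ \text{correct}} e^{\ell(a\mid x,\mathcal C_{t-1},h_0)}}{\sum_{a} e^{\ell(a\mid x,\mathcal C_{t-1},h_0)}},
\]
where $h_0$ is the empty prefix. This quantity changes only if some reflected transition has the form $(h_0,a)$, i.e.\ the reflection index satisfies $r_s=1$ for some $s<t$. By hypothesis~1, every reflected transition $(h_s,a_s)=(y_{<r_s},y_{r_s})$ has $r_s>n/2\ge 1$ until all incorrect steps at depth $>n/2$ have been eliminated. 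So it suffices to show that this elimination phase lasts at least $2^{n/2}$ rounds. Then for every $t<2^{n/2}$ the logits at $h_0$ equal $\ell_0$, which gives $p_1(t)=p_1$.

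The main step is a counting argument lower-bounding the duration of the elimination phase. Assume $p_1<1$ and take an incorrect first step $a$. By hypothesis~2, every incorrect prefix of length $d<n/2$ has at least two children. Since every extension of an incorrect prefix is incorrect, iterating from $a$ yields at least $2^{\lceil n/2\rceil-1}$ distinct incorrect prefixes at depth about $n/2$, and each of them has at least one late incorrect transition below it (for instance the transition into depth $\lfloor n/2\rfloor+1$). Each round contributes at most one reflection event, which identifies a single transition $(h,a)$ and increments one count $n(h,a;\mathcal C_t)$. For all late incorrect steps to be eliminated, each such transition must be reflected, or at least every incorrect branch below depth $n/2$ must be killed, and this requires one reflection per distinct deep incorrect prefix. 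The disjoint subtrees therefore force at least $2^{n/2}$ rounds, up to adjusting the exponent by one via the root branching (or by counting the depth-$\lceil n/2\rceil$ transitions, which are strictly beyond $n/2$). I would set this up carefully so that the count exactly reaches $2^{n/2}$: I would count leaves of the binary subtree below depth $n/2$ and use that each reflection lowers the logit of one transition only.

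The main obstacle is interpreting ``until all such steps have been eliminated''. A step is not eliminated after one reflection, since its logit only drops by $\eta$. I would adopt the weakest reading: a late step counts as eliminated once it has been reflected at least once. This only shortens the phase, so it makes the lower bound conservative. Because the $2^{n/2}$ distinct transitions must each be hit at least once and each round hits at most one, the phase lasts at least $2^{n/2}$ rounds. I also need Assumption~\ref{assumption:reachability} to be harmless: restart choices change which traces are sampled but never modify the logits at $h_0$. Combining the two steps, for $t<2^{n/2}$ no reflection has yet targeted depth $1$, so $p_1(t)=p_1$. Finally, after $2^{n/2}$ failed attempts the logits at $h_0$ can only have been decreased, and only on incorrect steps, which cannot raise the correct mass above the stated value under the proposition's framing; this gives the ``at most $p_1$'' conclusion.
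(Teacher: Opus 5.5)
Your core argument is the paper's. The first-step distribution depends only on the counts $n(h_0,a;\mathcal C)$, and hypothesis~1 forbids depth-$1$ reflections during the late phase. Binary branching below an incorrect first step gives exponentially many late incorrect nodes, and each round eliminates at most one, so the phase lasts about $2^{n/2}$ rounds. Your explicit reading of ``eliminated'' (reflected at least once) and your remark about restart prefixes make precise what the paper leaves implicit.

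\textbf{The final step is backwards.} You justify ``at most $p_1$ after $2^{n/2}$ failed attempts'' by saying the logits at $h_0$ can only have been decreased, and only on incorrect steps. Lowering $\ell(a\mid x,\mathcal C,h_0)$ for an incorrect $a$ shrinks only the denominator of your expression for $p_1(t)$, so it \emph{raises} the correct first-step mass. Had any depth-$1$ reflection occurred, the probability would exceed $p_1$, so monotonicity cannot give the bound. The fix is your own counting argument pushed one step further, which is how the paper concludes. Before the $j$-th reflection at most $j-1$ late nodes have been eliminated. So for every $j\le 2^{n/2}$ some late node remains and the $j$-th reflection is still late. Hence after $2^{n/2}$ failed attempts no depth-$1$ transition has been reflected, and the probability is still exactly $p_1$.

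\textbf{The exponent adjustments do not work.} Neither of your proposed adjustments reaches exactly $2^{n/2}$. The empty prefix is not an incorrect prefix, so hypothesis~2 gives no branching at the root, and $p_1<1$ guarantees only one incorrect first step. The odd-$n$ count gives only $2^{\lfloor n/2\rfloor}$. What the hypotheses actually support is a threshold of order $2^{\lceil n/2\rceil-1}$. This is the same slack the paper's proof dismisses as an ``inconsequential off-by-one'', so you should state it as such rather than promise an exact count.
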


Late reflection fails to eliminate incorrect prefixes at early stages and therefore the space of possible continuations remains large. 
The formal proof is given in Appendix~\ref{proof:late_reflections}.

\paragraph{Misleading Positive Feedback}

One might also consider reinforcing steps believed to be correct via \emph{positive feedback}. However, such feedback is inherently unreliable, as correctness cannot be verified without solving the underlying task.
This leads to the following negative result.

\begin{proposition}\label{res:positive_reflections}
(Positive reflections can harm)
Consider the in-context search update rule of Definition~\ref{def:in-context_learning}, but with the sign reversed so that a reflected transition receives a positive logit update. Let $h$ be a correct prefix and let $a$ be an incorrect next step from $h$ such that the total probability mass assigned to correct next steps from $h$ is positive. If reflection selects the transition $(h,a)$ and increases its logit by $\eta>0$, then the probability of sampling a correct solution strictly decreases.
\end{proposition}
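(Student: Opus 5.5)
The plan is to write the probability of sampling a correct solution as a sum over the steps available at the correct prefix $h$, and then track how that sum changes when only the logit of $(h,a)$ moves. Fix the context, and let $A_h$ be the set of next steps from $h$. Let $G\subseteq A_h$ be the steps $b$ for which $(h,b)$ lies on a correct reasoning trace. By hypothesis $a\notin G$. Define
\[
S(h)=\sum_{b\in A_h}\pi(b\mid h)\,q(h,b),
\]
where $\pi(\cdot\mid h)=\mathrm{softmax}(\ell(\cdot\mid x,\mathcal C,h))$ and $q(h,b)$ is the probability of completing a correct trace from $hb$ under the current context-conditioned model. Because $hb$ with $b\notin G$ lies on no correct trace, $q(h,b)=0$ for those $b$. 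So $S(h)=\sum_{b\in G}\pi(b\mid h)\,q(h,b)$.

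The key step is the effect of the update. Raising the logit of $a$ by $\eta$ changes only the softmax at node $h$. Every other node's distribution is unchanged, so $q(h,b)$ is unchanged for every $b$. The new probabilities are
\[
\pi'(b\mid h)=\frac{\pi(b\mid h)}{1+(e^{\eta}-1)\pi(a\mid h)}\quad\text{for } b\neq a .
\]
The denominator is strictly greater than $1$ because $\eta>0$ and $\pi(a\mid h)>0$; the latter holds since $a$ is an edge of the tree, i.e.\ has nonzero base probability, and finite logit shifts keep it positive. Hence $S'(h)=S(h)/\bigl(1+(e^\eta-1)\pi(a\mid h)\bigr)$.

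It remains to propagate this to the root. Write $h=y_{1:j}$. The overall success probability factors as
\[
P_{\mathrm{corr}}=\Pr[\text{reach } h]\cdot S(h)+R,
\]
where $R$ is the success mass from trajectories that do not pass through $h$. Neither $\Pr[\text{reach } h]$ nor $R$ depends on the softmax at $h$, so both are unchanged by the update. Therefore $P'_{\mathrm{corr}}<P_{\mathrm{corr}}$ exactly when $\Pr[\text{reach } h]\,S(h)>0$.

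This positivity is the main obstacle: the statement only gives that the mass on correct next steps from $h$ is positive, and that alone does not make $S(h)>0$. I would close the gap with Assumption~\ref{assumption:universality} applied inductively. Every correct prefix has correct-step mass at least $\gamma$ at each node (interpreted under the current, unmodified transitions below $h$). This gives $q(h,b)\ge\gamma^{\,n-j-1}>0$ for $b\in G$, and hence $S(h)>0$. The same argument along the correct prefix $h$ gives $\Pr[\text{reach } h]>0$, since each edge of $h$ has positive probability. The hypothesis that the correct-step mass at $h$ is positive guarantees $G\neq\emptyset$, so the sum defining $S(h)$ is genuinely positive and the decrease is strict.

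If restarts (Assumption~\ref{assumption:reachability}) are allowed, I would apply the same argument conditional on each restart prefix: those that extend to $h$ give a strict decrease, and the others are unaffected. At least one restart choice reaches $h$ with positive probability, namely the empty prefix, which completes the argument.
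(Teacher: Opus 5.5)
Your argument is correct and is essentially the paper's proof: only the softmax at $h$ changes, and every child other than $a$ is rescaled by the common factor $1/\bigl(1+(e^{\eta}-1)\pi(a\mid h)\bigr)<1$. The reach probability of $h$, the completion probabilities below its children, and the correct mass avoiding $h$ are all untouched, so the correct mass through $h$ strictly drops. You are more careful than the paper in checking $\Pr[\text{reach } h]\,S(h)>0$, which it leaves implicit; under the already-updated model this positivity comes from your finite-logit-shift observation rather than a literal $\gamma^{n-j-1}$ bound. Your optional restart paragraph would additionally need the restart rule to choose some prefix of $h$ with positive probability, which Assumption~\ref{assumption:reachability} does not guarantee.
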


Positive feedback amplifies selected prefixes rather than eliminating regions of the search space; when noisy, it increases the likelihood of extending incorrect reasoning paths and reduces the probability of sampling a correct solution. This highlights a fundamental asymmetry: eliminating incorrect reasoning is substantially more robust than reinforcing correct reasoning. Proof in Appendix~\ref{proof:positive_reflections}.

\subsection{Success Conditions}

We characterize when in-context search outperforms independent sampling. When reflection reliably identifies the \emph{earliest incorrect step}, sequential conditioning transforms an exponentially hard sampling problem into a polynomial one. The key mechanism is that identifying an incorrect prefix eliminates all of its continuations, pruning large regions of the search space rather than ruling out individual trajectories. For simplicity, we state the single-solution case; general case in Appendix~\ref{proof:inference_backtracks}.
\begin{theorem}\label{result:inference_backtracks}
(Efficient in-context search under early reflection)
Consider a reasoning problem with maximum reasoning length $n$ and at most $W$ possible next steps at each prefix. Suppose that:

\begin{enumerate}
\item (Support coverage) The base model satisfies $\gamma$-local support.
\item (Inference protocol) The model follows the in-context search protocol.
\item (Adaptive reflection) The reflection satisfies $\mathbb{E}[Z_t \mid \mathcal{C}_{t-1}] \ge p_r$ whenever the earliest incorrect prefix of \(\omega_t\) has been used at least
\(m_r\) times in \(\mathcal{C}_{t-1}\).
\end{enumerate}

Then for any $\delta\in(0,1)$, with probability at least $1-\delta$, a correct
solution is sampled after at most
\begin{equation}
T
=
\widetilde O\!\left(
nW m_r
+
\frac{nW}{p_r\eta}\log\frac1\gamma 
+
\frac{1}{p_r^2}\log\frac1\delta
\right)
\end{equation}
sampling rounds, where the $\widetilde O(\cdot)$ hides only logarithmic factors
in $n,W,1/\eta,1/p_r,m_r,\log 1/\gamma$ and $\log1/\delta$.
\end{theorem}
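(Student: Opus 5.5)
The argument is a potential-function (charging) argument over the incorrect transitions that branch off the unique correct trace $y^\ast=(y^\ast_1,\dots,y^\ast_n)$. Call a transition $(h,a)$ \emph{critical} if $h=y^\ast_{<i}$ is a correct prefix and $a\neq y^\ast_i$; there are at most $n(W-1)<nW$ of them. Every incorrect sampled trace $\omega_t$ has a well-defined earliest invalid transition, and by the definition of $k$ this transition is critical. Call round $t$ \emph{productive} if $Z_t=1$. In a productive round the reflection appends a critical transition $(h,a)$ to $\mathcal C_t$, which lowers its logit by $\eta$ under Definition~\ref{def:in-context_learning}. By Assumption~\ref{assumption:reachability}, no later restart lies below it. The proof then reduces to two bounds: the number of productive rounds that can occur, and the number of rounds it takes to accumulate them.

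\textbf{Step 1 (bounded budget of productive reflections).} Fix a correct prefix $h=y^\ast_{<i}$. By $\gamma$-local support the correct child has base probability at least $\gamma$, so its base logit exceeds each incorrect sibling's logit by at least $\log\gamma$. Once an incorrect sibling $a$ has been reflected $c$ times, its probability relative to the correct child is at most $e^{-\eta c}/\gamma$. Choose $c^\ast=\lceil \eta^{-1}\log(2nW/\gamma)\rceil$. After every critical transition at every level has received $c^\ast$ reflections, the total mass of incorrect children at each level is at most $1/(2n)$. The probability of following $y^\ast$ for all $n$ steps is then at least $(1-1/2n)^n\ge 1/2$, even when restarts are taken into account, since a restart prefix is either empty or a non-eliminated prefix.

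Because the correct path is unique and extends only through correct children, it is cleaner to argue that each productive round lowers a potential. Define $\Phi=\sum_{\text{critical }(h,a)}\max(0,c^\ast-n(h,a))$, so $\Phi_0\le nWc^\ast$. Each productive round either decreases $\Phi$ or hits a transition that is already saturated. Saturated transitions are sampled only with probability at most $1/(2n)$ per level. I would therefore bound the wasted productive rounds in expectation, or sidestep the issue by noting that extra suppression never hurts. I will argue that the total number of productive rounds before success is $O(nWc^\ast)=O(\frac{nW}{\eta}\log\frac{nW}{\gamma})$, up to a geometric number of additional rounds with success probability at least $1/2$ each.

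\textbf{Step 2 (the warm-up threshold $m_r$).} The reflection guarantee applies only once the earliest incorrect prefix has appeared at least $m_r$ times. I charge non-productive rounds as follows. A round whose earliest error $(h,a)$ has fewer than $m_r$ prior uses increments that counter, and this can happen at most $m_r$ times per critical transition. That gives at most $nWm_r$ such rounds in total, which is the first term of $T$. In every other failed round, $\Pr[Z_t=1\mid\mathcal C_{t-1}]\ge p_r$.

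\textbf{Step 3 (concentration).} Let $N$ be the number of needed productive rounds from Step 1. The number of eligible rounds required to obtain $N$ successes of a process with conditional success probability at least $p_r$ is handled by Azuma or Freedman applied to the martingale $\sum_t(Z_t-\mathbb E[Z_t\mid\mathcal C_{t-1}])$ restricted to eligible rounds. This gives $O(N/p_r+p_r^{-2}\log(1/\delta))$ rounds with probability at least $1-\delta/2$; the crude Azuma deviation is what produces the $1/p_r^2$ term. Adding an $O(\log(1/\delta))$ tail for the final stage, where success occurs with probability at least $1/2$ per round, and summing the three contributions gives $T$.

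\textbf{Main obstacle.} The delicate part is Step 1: showing that productive reflections are not wasted on already-suppressed transitions, and that restarts from context prefixes do not break the $(1-1/2n)^n$ argument. I would first handle this with the potential $\Phi$ plus a domination argument. Conditioned on not yet succeeding, each round either makes progress on $\Phi$ or is a warm-up round, except on an event of probability at most $1/2$ per round that is itself charged to the final geometric stage.
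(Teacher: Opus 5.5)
Your skeleton matches the paper's proof. You charge each earliest-error reflection to one of the at most $nW$ critical edges, get suppression after $O(\eta^{-1}\log(\cdot/\gamma))$ reflections because the correct child is never downweighted and has base mass at least $\gamma$, use reachability to keep reflected edges out of restart prefixes, count at most $nWm_r$ warm-up rounds, and apply Azuma on eligible rounds.

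\textbf{Where it fails.} The step that fails is converting the potential into a bound on rounds. With $c^\ast=\lceil\eta^{-1}\log(2nW/\gamma)\rceil$, a saturated edge can still carry conditional mass close to $1/(2nW)$. So in every round the earliest error can fall on a saturated edge with constant probability; your own estimate only caps this at $\tfrac12$. Earliest-error reflections on such edges are productive in your sense but do not decrease $\Phi$. Hence the number of productive rounds is \emph{not} bounded by $nWc^\ast$, and Step 3, which counts the eligible rounds needed to collect $N$ productive reflections, does not force termination.

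Your escape is to charge the wasted rounds to a final geometric stage in which success has probability at least $\tfrac12$ per round. That presupposes $\Phi=0$, i.e.\ every critical transition has been reflected $c^\ast$ times, which need never happen. A critical transition of negligible base probability is essentially never sampled, hence never reflected. So $\Phi$ can stay positive forever and the final stage never begins.

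\textbf{How the paper avoids this.} It removes wasted rounds altogether by making the threshold depend on a horizon $\overline T$. With $L=\eta^{-1}\log(\overline T nW/(\epsilon\gamma))$, a suppressed edge has conditional probability at most $\epsilon/(\overline T nW)$. A union bound over the at most $nW$ critical edges and $\overline T$ rounds then shows that, with probability at least $1-\epsilon$, no suppressed edge is ever the earliest error within the horizon. On that event each critical edge absorbs at most $L$ earliest-error reflections, so the budget $nWL$ is genuine and the Azuma argument closes. The $\overline T$ inside the logarithm is then removed by the self-consistency inequality: $x\ge A+B\log(Cx)$ whenever $x\ge 2A+2B\max\{1,\log(2BC)\}$.

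\textbf{A repair that keeps your threshold.} You could keep your $\overline T$-free $c^\ast$ and replace the final stage by a per-round argument. In every round the earliest error lies on a saturated edge with conditional probability at most $\tfrac12$. So each round is, with conditional probability at least $p_r/2$, a success, a warm-up round, or an eligible round whose reflection decrements $\Phi$. At most $1+nWm_r+nWc^\ast$ such events can occur before success. Azuma on these indicators, with warm-up rounds treated separately to avoid an extra $1/p_r$ on the $nWm_r$ term, gives a bound of the stated form. As written, however, neither mechanism appears in your Steps 1 and 3.
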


The theorem shows that when early error localization is available, the number of required attempts scales polynomially in $n$ and $W$, in contrast to the exponential scaling of independent sampling. This establishes that sequential conditioning can yield an exponential improvement in sampling efficiency.

\paragraph{Extensions and robustness.}
The result extends to more general settings, with corresponding degradations in sample complexity:

\begin{itemize}
\item \textbf{Multiple correct paths.} With $k$ correct solutions, $T = \tilde{O}(k)$ (Appendix \ref{proof:inference_backtracks}).

\item \textbf{Bounded delayed reflection.} If errors are identified within depth $\Delta_r$ of the earliest mistake, $T = \tilde{O}(\exp(\Delta_r))$ (Appendix \ref{sec:delayed_reflections}) - capturing a transition to inefficiency with late feedback.

\item \textbf{False positives.} If each correct transition is incorrectly flagged at most $d$ times, $T = \tilde{O}(d)$ (Appendix \ref{sec:false_positives}). Thus $O(poly(n))$ false positives retains polynomial efficiency.
\end{itemize}

\paragraph{Corollary (Exponential vs.\ polynomial gap).}
In reasoning problems, the base model's pass rate can decay exponentially with depth - if each step is correct with probability at most $p<1$, then the probability of a correct trajectory is at most $p^n = O(\exp(-n))$.
Under Theorem~\ref{result:inference_backtracks}, however, in-context search succeeds with high probability after only $O(n)$ sequential attempts (up to logarithmic factors), yielding an exponential separation between parallel sampling and sequential inference.

\begin{remark}[Local exponential improvement]
The dependence on $\gamma$ is logarithmic, reflecting a local exponential gain. At a single reasoning step, independent sampling requires $\Theta(1/\gamma)$ attempts to select it. In contrast, in-context search suppresses incorrect alternatives across rounds, promoting the correct continuation to high probability after $O(\log(1/\gamma))$ attempts. Thus, the exponential improvement appears already at the level of a single step, and compounds across depth.
\end{remark}

\section{Learning and Emergence}
\label{sec:learning}
Here we study the learning perspective on in-context search. We first show
that the inference-time guarantee of
Theorem~\ref{result:inference_backtracks} is stable under standard token-level training: if a model
accurately imitates the transcript distribution of an efficient search policy,
it inherits its pass-rate guarantee with only polynomial dependence on the
rollout length.
We then give a complementary stagewise analysis, serving as a stylized proxy for
RLVR, showing that under noisy reflection feedback, the posterior update rule of
Definition~\ref{def:in-context_learning} arises as the optimal policy extension.

\subsection{Robustness to Approximate Posterior Updates}\label{res:sft}

The efficiency guarantees of Theorem~\ref{result:inference_backtracks} rely on a specific posterior update rule.
We show that the exact implementation is unnecessary: a step-wise approximation, learnable from supervised log loss on next-step prediction suffices to recover the guarantee.

\begin{proposition}[Step-wise transfer to search pass rate]\label{thm:tokenwise_transfer}
Let $P^*$ denote an autoregressive distribution over search transcripts that satisfies the inference-time guarantee of Theorem~\ref{result:inference_backtracks}. For $\beta\in(0,1)$, let
$T_\beta := T(\beta,n,W)
$
denote the rollout length such that the probability of generating a correct solution by round $T_\beta$ under $P^*$ is at least $1-\beta$.
Let $H$ be a hypothesis class of autoregressive next-step predictors, and let $P_h\in H$ satisfy
\begin{equation}
\mathbb{E}_{x\sim D,\; y\sim P^*(\cdot\mid x)}
\left[
\frac1{|y|}\sum_{i=1}^{|y|}
\log\frac{1}{P_h(y_i\mid x,y_{<i})}
\right]
\le
\frac{\delta\epsilon^2}{4T_{\epsilon/2}} 
\end{equation}
Then for an i.i.d. problem $x\sim D$, with probability at least $1-\delta$, the pass rate of $P_h$ by round $T_{\epsilon/2}$ is at least $1-\epsilon$.
\end{proposition}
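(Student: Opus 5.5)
The plan is to bound the total variation distance between $P^*$ and $P_h$ on transcripts truncated at round $T_{\epsilon/2}$, and then pass this bound to the success event. Write $P^*_{\le T}$ and $P_{h,\le T}$ for the laws of the first $T=T_{\epsilon/2}$ rounds of the transcript. The pass-rate event ``a correct solution is produced by round $T$'' is measurable with respect to this truncated transcript. Hence
\[
\Pr_{P_h}[\text{success by } T] \ge \Pr_{P^*}[\text{success by } T] - \mathrm{TV}(P^*_{\le T},P_{h,\le T}) \ge 1-\tfrac{\epsilon}{2} - \mathrm{TV}.
\]
So it suffices to show that, with probability at least $1-\delta$ over $x\sim D$, the total variation is at most $\epsilon/2$.

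\textbf{Step 1 (from per-step log loss to KL).} I would use the chain rule for KL divergence on autoregressive distributions:
\[
\mathrm{KL}(P^*_{\le T}\,\|\,P_{h,\le T}) = \mathbb{E}_{y\sim P^*}\Big[\sum_{i\le |y_{\le T}|}\big(\log P^*(y_i\mid x,y_{<i})-\log P_h(y_i\mid x,y_{<i})\big)\Big].
\]
The entropy term is nonnegative, so this is at most the cumulative cross-entropy $\mathbb{E}\big[\sum_i \log\frac1{P_h(y_i\mid x, y_{<i})}\big]$. (If one prefers, the target can be regarded as having negligible conditional entropy, as in rollouts from a search policy with deterministic reflections. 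Otherwise the hypothesis is read as an excess-loss bound, which is the natural interpretation.)

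\textbf{Step 2 (normalization).} This is the step I expect to be the main obstacle. The hypothesis is stated per step, normalized by $1/|y|$, while the KL is cumulative. To convert, I would bound the cumulative loss over the first $T$ rounds by $T$ times a per-round average. Here ``step'' is interpreted at the granularity of attempts/rounds, so each of the $T$ rounds contributes one averaged unit; equivalently, one uses that truncation to $T$ rounds keeps the number of summands proportional to $T$. This gives, in expectation over $x$,
\[
\mathbb{E}_x\big[\mathrm{KL}_x\big] \le T\cdot\frac{\delta\epsilon^2}{4T}=\frac{\delta\epsilon^2}{4}.
\]
If the length normalization does not line up exactly, I would absorb the ratio of $|y_{\le T}|$ to $T$ into the constant. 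This is the only place where the scaling $\delta\epsilon^2/(4T_{\epsilon/2})$ is used.

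\textbf{Step 3 (Markov and Pinsker).} Apply Markov's inequality over $x\sim D$. With probability at least $1-\delta$,
\[
\mathrm{KL}_x \le \frac{\epsilon^2}{4}\le \frac{\epsilon^2}{2}.
\]
Pinsker's inequality, $\mathrm{TV}\le\sqrt{\mathrm{KL}/2}$, then gives $\mathrm{TV}\le \epsilon/(2\sqrt2)\le\epsilon/2$. Combining with the display in the first paragraph yields a pass rate of at least $1-\epsilon$ by round $T_{\epsilon/2}$ for such $x$, which completes the argument.
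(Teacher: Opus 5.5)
Your proposal is correct and follows the paper's proof step for step. The chain rule bounds the KL of the truncated transcripts by the cumulative cross-entropy, since the entropy of $P^*$ is nonnegative. The paper then converts the length-normalized loss exactly as in your primary reading of Step~2, by asserting that every transcript has length at most $T$, so your constant-absorbing fallback is never needed. Markov over $x\sim D$, Pinsker, and $Q(A)\ge P(A)-\mathrm{TV}(P,Q)$ then finish the argument.
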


The required step-level loss scales as $O(1/T_{\epsilon/2})$. In particular, for fixed reflection parameters, Theorem~\ref{result:inference_backtracks} gives $T_{\epsilon/2}=\widetilde O(n W )$, so standard generalization bounds imply polynomial sample complexity, $\widetilde O(n^2W^2/(\delta^2\epsilon^4))$, up to logarithmic factors. Thus, standard autoregressive training on search rollouts suffices to recover the efficiency of in-context search.

\subsection{Connection to Reinforcement Learning with Verifiable Rewards}\label{res:rlvr}

We provide a principled justification for the posterior reweighting rule in Section~\ref{sec:framework} via an idealized \emph{stagewise policy extension} setting.
This is not a model of the full RLVR training dynamics; rather, it abstracts one empirical feature of RLVR-trained reasoning models: CoT length tends to increase over training \citep{guo2025deepseek}. We interpret this as incremental policy extension, where later training stages learn to append additional reasoning attempts conditioned on earlier attempts and feedback. We ask what update rule is optimal if each extension is chosen to maximize correctness likelihood.

\paragraph{Stagewise policy extension.}
Starting from a base policy that generates a single trajectory, we iteratively extend it by adding additional sampling stages. At stage $t$, the policy generates a trajectory conditioned on previous attempts and reflections. Given a fixed $t$-stage policy, we characterize the optimal $(t+1)$-stage extension for maximizing correctness likelihood.

\paragraph{Noisy reflection model.}
We assume a latent correct reasoning trace $y^\star$ and a sequence of noisy reflections on sampled trajectories. Each reflection identifies a reasoning step along a trajectory and indicates whether it is the first incorrect step (with probability $p_r> \tfrac12$) or the last correct step (with probability $1-p_r$).
Reflections are conditionally independent given $y^\star$, and affect only the likelihood of candidate next steps.
These signals provide local evidence about trajectory correctness and induce a posterior $P_t(y) = \Pr(y^\star = y \mid \mathcal{C}_t)$. Formal definitions given in Appendix~\ref{noisy_reflection_oracle_assumptions}.

\begin{theorem}[Optimal stage extension under noisy reflection]\label{thm:posterior}
Let $\pi^{(t)}$ be a policy generating $t$ trajectories, and let $\mathcal{C}_t$ denote the resulting history. Let $q$ be an autoregressive policy parameterized by per-step logits $\ell_t$. Under the noisy reflection model, the optimal $(t+1)$-stage extension policy in the sense of maximizing
\begin{equation}
\mathbb{E}_{y^\star\sim P_t}[\log q(y^\star \mid \mathcal C_t)]
\end{equation}
satisfies:
\begin{equation}
\ell_t(y_i|y_{<i})=\ell_0(y_i|y_{<i})-\eta n_t(y_{<i},y_i), \qquad
 \eta=\log\left(\frac{p_r}{1-p_r}\right).
\end{equation}
where $n_t(y_{<i},y_i)$ counts the number of times the reflection marked transition $(y_{<i},y_i)$ in $\mathcal{C}_t$.
\end{theorem}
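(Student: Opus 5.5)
The plan is to compute the posterior $P_t$ explicitly under the noisy reflection model. Then I will show that the log-loss objective is maximized by matching $q$'s conditionals to the posterior's conditionals, and finally read off the logit update from the resulting factorization.

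\emph{Step 1 (optimal $q$ is the posterior).} For any autoregressive $q$, the objective $\mathbb{E}_{y^\star\sim P_t}[\log q(y^\star\mid\mathcal C_t)]$ equals $-H(P_t)-\mathrm{KL}(P_t\,\|\,q)$. It is therefore maximized exactly when $q(\cdot\mid\mathcal C_t)=P_t$. Since $q$ ranges over all autoregressive policies with free per-step logits, $P_t$ is realizable. Concretely, $q(y_i\mid y_{<i})=P_t(y_i\mid y_{<i})$, the posterior conditional obtained by marginalizing over completions.

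\emph{Step 2 (Bayes update).} Take the prior $P_0(y)\propto\prod_i e^{\ell_0(y_i\mid y_{<i})}$, i.e.\ the base model. By conditional independence of reflections given $y^\star$,
\[
P_t(y)\propto P_0(y)\prod_{s\le t}\Pr(\text{reflection}_s\mid y^\star=y).
\]
A reflection marks a transition $(h,a)$ on a sampled trajectory. It is reported as the first incorrect step with probability $p_r$ when this is true relative to $y^\star$, and with probability $1-p_r$ otherwise. Under the formal assumptions of Appendix~\ref{noisy_reflection_oracle_assumptions}, the likelihood therefore depends on $y$ only through whether $y$ passes through $(h,a)$. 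A reflection on $(h,a)$ thus multiplies the weight of every $y$ containing the transition $(h,a)$ by a factor proportional to $(1-p_r)/p_r$, relative to the $y$ that avoid it. The result is
\[
P_t(y)\propto P_0(y)\prod_i \Bigl(\tfrac{1-p_r}{p_r}\Bigr)^{n_t(y_{<i},y_i)}
=\prod_i \exp\bigl(\ell_0(y_i\mid y_{<i})-\eta\, n_t(y_{<i},y_i)\bigr),
\]
with $\eta=\log\frac{p_r}{1-p_r}>0$ because $p_r>1/2$.

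\emph{Step 3 (factorization into per-step logits).} The unnormalized posterior is a product of per-transition weights $w(y_{<i},y_i)=\exp(\ell_t(y_i\mid y_{<i}))$ on the tree $T_x$. Such a product measure is generated by an autoregressive policy whose logits are $\ell_t$ plus a prefix-dependent correction, namely the log of the subtree partition function $Z(h)=\sum_{\text{completions}}\prod w$. So I must argue that $\log Z(y_{\le i})$ either does not vary over siblings $y_i$ or is absorbed into the convention for $\ell_0$. If $\ell_0$ denotes the base model's conditional log-probabilities and the penalty touches only the reflected edge, the sibling subtrees' $Z$ are unchanged by the count on $(h,a)$ itself. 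Counts deeper in a subtree do alter its $Z$, however.

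The main obstacle is Step 3. I expect to resolve it in one of two ways. The first is to use the appendix's modeling convention that reflections "affect only the likelihood of candidate next steps," i.e.\ the local likelihood normalizes over siblings at $h$ so that no deeper terms propagate. The second is to interpret $\ell_t$ as logits defined up to this per-prefix normalization, as the theorem implicitly does by writing only the local correction. The first option gives the clean statement and is the one I would try first. I would also check the two-outcome likelihood ratio carefully: "last correct step" must be shown to contribute the factor $1-p_r$ to paths through the marked edge, and $p_r$ to paths diverging there, so that the ratio is exactly $(1-p_r)/p_r$.
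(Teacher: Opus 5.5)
Your Step 1 is essentially the paper's closing argument, and the obstacle you isolate in Step 3 is real. As written, though, Steps 2 and 3 are incompatible, and only one of your two fixes works.

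\textbf{Why the global route fails.} With a genuinely global latent $y^\star$, your Step 2 path weights are right, but the conditionals they induce are
\[
\ell_t(a\mid h)=\ell_0(a\mid h)-\eta\,n_t(h,a)+\log Z_t(ha).
\]
Here $Z_t(ha)$ is the $P_0$-expectation, over completions of $ha$, of $e^{-\eta N}$, with $N$ the number of marks on the completion's edges. This correction depends on the sibling $a$. For example, a single mark on an edge below $a$ but not below $b$ changes $P_t(a\mid h)/P_t(b\mid h)$, even though $n_t(h,a)=n_t(h,b)=0$. So your second option fails: a per-prefix constant is shared by all children of $h$ and cannot absorb a sibling-dependent term. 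Under the global reading, the stated formula is actually false at every ancestor of a marked edge.

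\textbf{What makes the theorem true.} The noisy reflection model imposes locality by assumption. Prefix locality gives $P_s(\cdot\mid h)=P_{s-1}(\cdot\mid h)$ for $h\neq h_s$, and the trajectory posterior is \emph{defined} as the product of the local posteriors. That is your first option and the paper's route. But then Step 2 must be discarded, not supplemented. Its globally normalized $\prod_i\exp(\ell_0-\eta n_t)$ is a different distribution from the locally normalized product the assumptions define. In the example above it shifts mass at $h$ away from $a$, which prefix locality forbids.

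\textbf{How to repair it.} Do what the paper does: apply Bayes only to the local latent \emph{which child of $h_s$ lies on $y^\star$}. Take prior $P_{s-1}(\cdot\mid h_s)$, likelihood $1-p_r$ for $a_s$, and $p_r$ for every other child. This gives $P_s(a\mid h_s)\propto P_{s-1}(a\mid h_s)e^{-\eta\mathbf 1\{a=a_s\}}$, and the no-distinction-among-unmarked-alternatives property comes for free. Iterating with conditional independence yields $P_t(a\mid h)\propto P_0(a\mid h)e^{-\eta n_t(h,a)}$, and your Step 1 then finishes the proof.

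The likelihood-ratio check you plan at the end is unnecessary. The formal model states $\Pr(r_s\mid A_s)=1-p_r$ and $\Pr(r_s\mid A_s^c)=p_r$ directly.
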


Theorem~\ref{thm:posterior} shows that the exponential reweighting rule of Section~3 arises as the optimal strategy for extending a multi-stage reasoning policy under noisy feedback.
This provides a lens on RLVR: while we do not model the full training dynamics, the result suggests that policies optimized stage-wise under noisy feedback may favor update rules that approximate this form of posterior reweighting.

\section{Empirical Evidence}
\label{sec:experiments}

We provide empirical evidence supporting the qualitative predictions of our theory. Our goal is not a comprehensive empirical evaluation, but to test whether the key mechanisms identified by the theory are reflected in practice. We test two key components:  (i) reflection as an error localization mechanism, and (ii) observable consequences of in-context search in large reasoning models. In appendix \ref{empirical:synthetic} we test our results on synthetic traces with a controlled implementation of the theoretical update and reflection assumptions.

\subsection{Reflection as an Error Localization Mechanism}

Our theory assumes that reflection can identify incorrect reasoning steps, particularly localizing early mistakes with non-negligible probability. 
We evaluate this on structured reasoning traces (arithmetic chains and equation solving) with a single injected error, asking the model to identify the earliest incorrect step. Qwen-3-4B-Instruct was used.

We measure the reflection delay - the gap between the earliest incorrect step and the step identified by the model. Figure~\ref{fig:reflection} shows non-trivial mass at zero delay, together with a sparse tail of delayed reflections.
This supports the early localization assumption, while the tail aligns with the robustness analysis.

\begin{figure}[h!]
    \centering
    \includegraphics[width=0.7\linewidth]{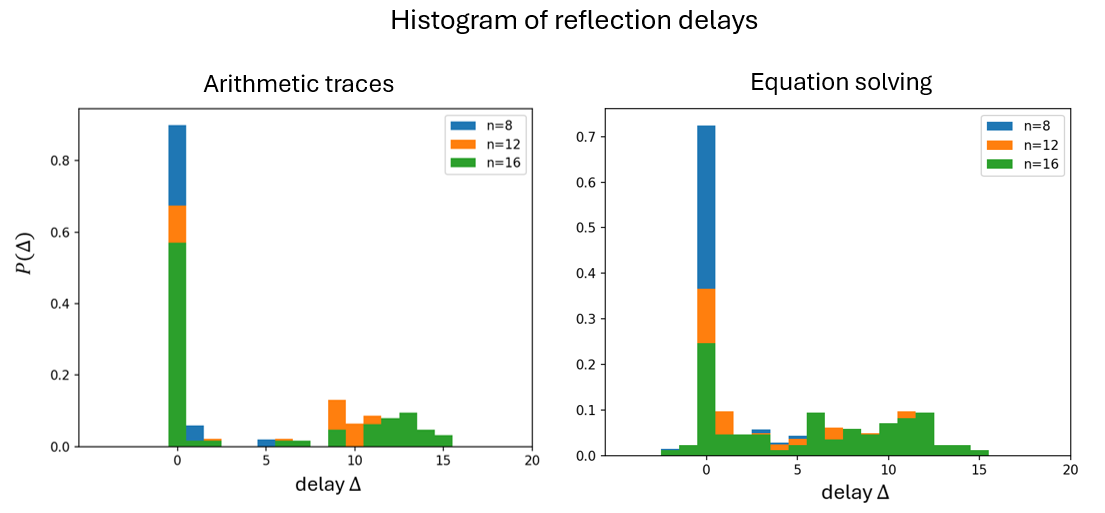}
    \caption{Reflection delay distributions for arithmetic and equation-solving tasks.}
    \label{fig:reflection}
\end{figure}

\subsection{Evidence in Large Reasoning Models}\label{empirical:real}

Theorem~\ref{result:inference_backtracks} suggests that, as reasoning unfolds, probability mass should concentrate on correct solutions. To probe this, we estimate prefix-conditioned pass rates - using them as a proxy for posterior mass - along CoT prefixes. We present results for DeepSeek-R1-Distill-Qwen-2.5-1.5B; complementary results for DeepSeek-R1-Distill-Qwen-2.5-7B are provided in Appendix~\ref{sec:complementary-7B}.

Figure~\ref{fig:spaghetti}(a) shows this quantity for successful trajectories. Pass rates typically increase toward 1 through intermediate gains rather than a single jump, though the evolution is heterogeneous and not strictly monotone. This indicates progressive accumulation of probability mass on correct solutions.
Figure~\ref{fig:spaghetti}(b) shows unsuccessful trajectories. Here, pass rates remain low and highly non-monotonic, suggesting that the dynamics are not simply accumulating progress, but involve redistribution across competing continuations. In appendix \ref{sec:sequential_revisions}, we perform this experiment in a setting more similar to our framework - explicit rounds of trace generation and reflections, yielding qualitatively similar results.

To isolate the effect of self-correction, we segment successful trajectories at correction points and measure log changes in pass rate between segments. Figure~\ref{fig:spaghetti}(c) shows that most updates are small, with both negative values and a long positive tail: some corrections reduce success probability, while a few induce large gains.
This supports a view of self-correction as not simply accumulating progress, but as sparse yet sometimes substantial reallocation of probability mass.

\begin{figure}[h!]
    \centering
    \includegraphics[width=0.9\linewidth]{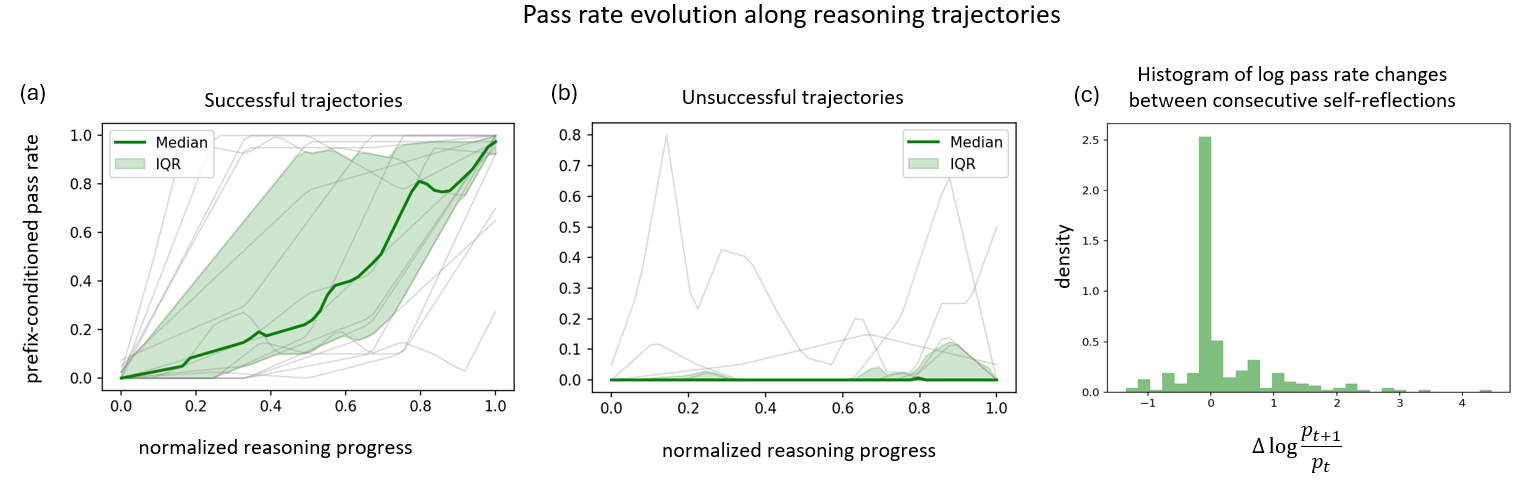}
    \caption{
    Evolution of pass rate conditioned on CoT prefixes along reasoning trajectories on AIME 2025 (DeepSeek-R1-Distill-Qwen-2.5-1.5B), using prefixes sampled at $\sim$1k-token intervals. Gray lines show individual trajectories; green shows the median with interquartile range. (a) Successful trajectories generally increase, with occasional reversals. (b) Unsuccessful trajectories remain low and highly non-monotonic. (c) Reflection-level log changes in pass rate along successful trajectories between consecutive self-correction segments.
    }
    \label{fig:spaghetti}
\end{figure}

\section{Discussion}
\label{sec:discussion}

In this work, we studied in-context search in LRMs as feedback-driven inference over a base model prior. We show that its effectiveness is governed by the quality of reflection: when reflections reliably localize early mistakes, sequential updates can transform an exponentially small pass rate into high success probability using only polynomially many attempts; when reflections are uninformative, conditioning provides no asymptotic improvement over parallel sampling and may even degrade performance. 
We further show that the required update behavior is robust and learnable from supervised rollouts, and that sequential extension under a final correctness reward - mirroring RLVR - the optimal policy extension implements the same update rule, helping explain the emergence of search-like reasoning in practice. We further discuss the limitations of this framework in Appendix \ref{sec:limitations}.

\section{Acknowledgements}

This research was supported by the ERC (European Research Council) and the ISF (Israel Science
Foundation).

\bibliography{main}
\bibliographystyle{plainnat}

%%%%%%%%%%%%%%%%%%%%%%%%%%%%%%%%%%%%%%%%%%%%%%%%%%%%%%%%%%%%

\clearpage
\appendix

\section{Limitations}\label{sec:limitations}

\paragraph{Assumptions:} Our analysis relies on several assumptions that may not hold fully in practice.

\begin{itemize}
    \item First, we assume $\gamma$-local support (assumption \ref{assumption:universality}), ensuring that correct continuations have nonzero probability under the base model. While mild, due to the logarithmic dependence of the results, this condition may fail for difficult problems or poorly calibrated models, in which case no amount of inference can recover correct solutions. Even so, this assumption's limitations has empirical evidence in works such as \citep{yue2025does}, where the RLVR sharpens reasoning traces already existent in the base model, indicating the importance of the base model having sufficient support for correct traces.

    \item Second, the adaptive reflection signal (definition \ref{def:reflection}) requires that the model can identify early incorrect steps with non-negligible probability after repeated exposure.
    In practice, reflection may be more delayed or reinforcing a promising direction instead of pointing to an incorrect one.
    In this work we show these two can harm search, but in practice the model can compensate for this with a smarter update of its distribution over traces.

    \item Third, a central component of our analysis is the local in-context reweighting update (definition \ref{def:in-context_learning}), which multiplicatively downweights invalid prefixes. The purpose of this is so that the model is responsive to the reflection signal. This rule is idealized and may not be implemented exactly by learned models. We show that approximate step-wise updates suffice to preserve the efficiency guarantees. Additionally, the locality can be relaxed further - for the main results of theorem \ref{result:inference_backtracks}, there need not be restrictions on the redistribution of probability mass on steps that succeed the first incorrect step. Even so, the extent to which real models approximate this behavior depends on training dynamics and model capacity.
    \item Fourth, we assume contextual reachability (assumption \ref{assumption:reachability}), ruling out repeated resumption from previously invalidated prefixes. As shown in Appendix~\ref{sec:loops}, current large reasoning models can exhibit reasoning loops in which invalid prefixes are revisited, leading to inefficient search. The results show that this is a major failure mode, responsible for many of the unsuccessful CoTs. Our theoretical results for search efficiency therefore characterize an idealized regime in which such behavior is avoided, as when search fails it is often for this reason.

    \item Reflection parameters: We treat the reflection parameters $p_r$ and $m_r$ as fixed quantities. This dependence may be substantially more favorable than solving the full problem - identifying an incorrect transition in a chain of length $n$ is a much simpler task than searching over an exponentially large space of solutions - but characterizing it empirically and theoretically remains an important direction for future work. We note that if $p_r$ and $m_r$ scale polynomially with $n$, then the efficiency of in-context search is maintained. If they do not, this is an outcome consistent with the failure mode of proposition \ref{res:late_reflections}, of not identifying the earliest incorrect error quickly enough, which leads to no asymptotic improvement over parallel sampling from the base model.
\end{itemize}

\paragraph{Theoretical results:} Our theoretical results focus on worst-case bounds in terms of reasoning length $n$ and branching factor $W$. These quantities may be difficult to estimate in practice, and actual performance may depend on additional factors.
Furthermore, the results related to RLVR as a stage-wise policy extension (Theorem \ref{res:rlvr}), does not explicitly study an RLVR algorithm, but an effective process consistent with the training dynamics of RLVR, where a model gradually increases the length of the search process under a correctness maximization reward \citep{guo2025deepseek}. 

\paragraph{Empirical results:} Our empirical evaluation primarily provides qualitative validation of the theoretical predictions rather than a comprehensive empirical study.

Overall, our results should be interpreted as identifying conditions under which in-context search is efficient, rather than guaranteeing such behavior in all practical settings.

\section{Tree Representation for Analysis}
\label{sec:tree_representation}

\paragraph{Tree representation for analysis.}
For the purpose of analysis, we represent the prefix structure of reasoning traces as a rooted tree. Each node corresponds to a prefix $h = y_{1:i}$, and edges correspond to valid next-step extensions under the base model. A complete trajectory corresponds to a root-to-leaf path.

This representation is purely a notational device: the model itself operates by conditioning on prefixes, as described in Section~\ref{sec:framework}. We use this representation throughout the proofs to reason about how local updates affect sets of continuations.

Each reflection event identifies a transition, which in the tree representation can equivalently be represented by the child node reached by that transition: if the reflection signal returns a transition corresponding to index $r_t$ in a trajectory $(y_1,\dots,y_n)$, this corresponds to identifying the node $v = y_{1:r_t}$, or equivalently the transition from its parent to $v$, as incorrect.

For clarity, we make the correspondence explicit:
\[
\begin{aligned}
\text{prefix } h &\leftrightarrow \text{node } v, \\
\text{next reasoning step } a &\leftrightarrow \text{child of } v, \\\text{transition } (h,a) &\leftrightarrow \text{edge } (v,v'), \\
\text{reasoning trace } (y_1,\dots,y_n) &\leftrightarrow \text{root-to-leaf path}, \\
\text{earliest incorrect prefix } y_{1:k} &\leftrightarrow \text{earliest incorrect node } v_k, \\
\text{reflection outputs transition indexed by } r_t &\leftrightarrow \text{identified  node location on the path}, v_{r_t}\\
Z_t = \mathbf{1}\{r_t = k\}
&\leftrightarrow
Z_t = \mathbf{1}\{v_{r_t} = v_k\}.
\end{aligned}
\]

\section{Proof of Theorem \ref{result:inference_backtracks}}\label{proof:inference_backtracks}

\paragraph{Multiple correct solutions.}
In the main theorem we presented for a single correct reasoning trace, $k=1$, here we present for the general case:

\begin{theorem}
(Efficient in-context search under early reflection with multiple solutions)
Consider a reasoning problem with maximum reasoning length $n$ and at most $W$ possible next steps at each prefix. Suppose that there are $k$ correct reasoning traces and that:

\begin{enumerate}
\item (Support coverage) The base model satisfies $\gamma$-local support (Assumption~\ref{assumption:universality}).

\item (Inference protocol) The model follows the in-context search protocol (Definition~\ref{def:ics_protocol}).

\item (Adaptive reflection) The reflection satisfies that for some
\(p_r\in(0,1]\) and \(m_r\ge 0\), whenever the earliest incorrect prefix in the
sampled trajectory has appeared at least \(m_r\) times in the history, we have
\[
\mathbb{E}[Z_t \mid \mathcal{C}_{t-1}] \ge p_r,
\]
where \(Z_t\) indicates whether the earliest incorrect reasoning step in the
sampled trace is identified (Definition~\ref{def:reflection}).
\end{enumerate}

Then for any $\delta\in(0,1)$, with probability at least $1-\delta$, a correct
solution is sampled after at most
\begin{equation}
T
=
\widetilde O\!\left(
knW m_r
+
\frac{knW}{p_r\eta}\log\frac1\gamma 
+
\frac{1}{p_r^2}\log\frac1\delta
\right)
\end{equation}
sampling rounds, where the $\widetilde O(\cdot)$ hides only logarithmic factors
in $n,k,W,1/\eta,1/p_r,m_r,\log 1/\delta$, and $\log 1/\gamma$.\end{theorem}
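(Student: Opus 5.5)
We prove the bound with a potential (budget) argument over rounds. Each failed round either makes progress on a bounded resource or belongs to a controlled minority of rounds. Work in the tree representation of Appendix~\ref{sec:tree_representation}. Call a node \emph{incorrect-frontier} if it is the earliest incorrect node $v_k$ of some trajectory, i.e.\ an incorrect child of a correct node. There are at most $knW$ such nodes, since there are at most $kn$ correct nodes and each has at most $W$ children.

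In each failed round $t$ the sampled trajectory has a well-defined earliest incorrect node $v_k(t)$, and we charge the round to it. The rounds charged to a given frontier node $v$ fall into three types. First, \emph{warm-up} rounds, in which $v$ has been used fewer than $m_r$ times in $\mathcal C_{t-1}$; there are at most $m_r$ of these per node. Second, \emph{hit} rounds, with $Z_t=1$, where the reflection flags exactly the transition into $v$ and its logit drops by $\eta$. Third, \emph{miss} rounds, with $Z_t=0$.

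\textbf{Step 1: hits per node are bounded.} Once the edge into $v$ (from a correct parent $h$) has been flagged $N$ times, its weight relative to the correct child $c$ of $h$ is multiplied by $e^{-\eta N}$. The logit of $c$ is never decreased, because only incorrect transitions are flagged (no false positives in the main case). By $\gamma$-local support, $c$ starts with relative weight at least $\gamma$ against each sibling. Hence after $N=\lceil \eta^{-1}\log(1/\gamma)\rceil + O(\eta^{-1}\log(nW/\epsilon))$ hits, the probability of stepping into $v$ from $h$ is at most $\epsilon/(nW)$.

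\textbf{Step 2: visits per node are bounded.} To control how often $v$ is entered at all, combine Step 1 with contextual reachability (Assumption~\ref{assumption:reachability}). Restarts never land strictly below a flagged edge. Any entry into $v$ therefore comes either from a fresh sample through $h$, which is suppressed after $N$ hits, or from a restart at $v$ or a descendant before $v$ was ever flagged. Only the first hit matters for stopping restarts. A cleaner route is to track, for each frontier node, the number of rounds charged to it after it is "saturated" with $N$ hits, and bound the total number of post-saturation charges by a geometric argument.

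\textbf{Step 3: martingale concentration.} Let $S_T$ be the number of non-warm-up failed rounds up to $T$, and $H_T$ the number of hits among them. Since $\mathbb E[Z_t\mid\mathcal C_{t-1}]\ge p_r$ on these rounds, Azuma--Hoeffding or Freedman applied to $\sum (Z_t - p_r)$ gives $H_T \ge p_r S_T - O(\sqrt{S_T\log(1/\delta)})$ with probability $1-\delta$. Steps 1 and 2 give $H_T \le knW\cdot N$ plus the number of rounds charged to saturated nodes. Solving the resulting inequality yields
\[
S_T \lesssim \frac{knW}{p_r\eta}\log\frac1\gamma + \frac{1}{p_r^2}\log\frac1\delta ,
\]
and adding the $knW m_r$ warm-up rounds gives the stated $T$. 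If every round up to $T$ failed, this count would be exceeded, so a correct trajectory must have been sampled.

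\textbf{Main obstacle.} The key difficulty is bounding the rounds charged to saturated frontier nodes, which still receive small probability $\epsilon/(nW)$ each. I would handle this by a union and expectation bound: with per-step leakage $\epsilon/(nW)$ along at most $n$ depth levels, each round leaks with probability at most $\epsilon$ in total. With $\epsilon = 1/T$ this contributes $O(\log(1/\delta))$ extra rounds with high probability, via a Chernoff bound. The choice $\epsilon=1/T$ puts a $\log T$ term into $N$; this is the source of the $\widetilde O$ logarithmic factors, resolved by a self-consistent choice of $T$.
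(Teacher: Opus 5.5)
Your proposal is correct and is essentially the paper's argument. The shared ingredients are: per-edge logit suppression of the at most $knW$ frontier edges after $\eta^{-1}\log(\cdot/\gamma)$ hits, a bound on traversals through suppressed edges, Azuma--Hoeffding over eligible rounds plus at most $knWm_r$ warm-up rounds, and a self-consistent horizon. Suppression works because correct children are never flagged; for $k>1$ you should use the aggregate correct mass $\ge\gamma$ rather than a single child $c$.

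The only difference is minor. You tolerate a few leak rounds, using threshold $\epsilon/(nW)$ with $\epsilon=1/T$, a per-depth reach-probability sum, and a Chernoff bound; this leak argument supersedes your vague Step 2. The paper instead sets the threshold to $\epsilon/(\overline T knW)$ and works on the event that no suppressed edge is ever taken as a first incorrect edge.
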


As can be seen, the scaling with correct reasoning traces is: $T=\tilde O(k)$.

\textit{proof:}

We analyze the process using the tree representation of prefixes introduced in Section~\ref{sec:tree_representation}. In this representation, prefixes correspond to nodes, and possible next steps correspond to edges. We will refer to prefixes and their extensions interchangeably as nodes and their children.

For clarity, we make the correspondence explicit:
\[
\begin{aligned}
\text{prefix } h &\leftrightarrow \text{node } v, \\
\text{next reasoning step } a &\leftrightarrow \text{child of } v, \\\text{transition } (h,a) &\leftrightarrow \text{edge } (v,v'), \\
\text{reasoning trace } (y_1,\dots,y_n) &\leftrightarrow \text{root-to-leaf path}, \\
\text{earliest incorrect prefix } y_{1:k} &\leftrightarrow \text{earliest incorrect node } v_k, \\
\text{reflection outputs transition indexed by } r_t &\leftrightarrow \text{identified  node location on the path}, v_{r_t}\\
Z_t = \mathbf{1}\{r_t = k\}
&\leftrightarrow
Z_t = \mathbf{1}\{v_{r_t} = v_k\}.
\end{aligned}
\]

Under this correspondence, downweighting a prefix suppresses all continuations passing through the associated node, allowing us to reason about the effect of updates via elimination of subtrees.
We begin the proof with the following lemma:

\begin{lemma}\label{lemma:node_suppression_finite_horizon}
Under $\gamma$-local support, let $u$ be a correct node and let $v$ be an
incorrect child of $u$. Suppose that the reflection signal does not return correct nodes. If $v$ is returned by the reflection signal at least
\[
\frac{1}{\eta}\log\frac{M}{\epsilon\gamma}
\]
times, then the conditional probability of sampling $v$ from prefix $u$ is at
most $\epsilon/M$.
\end{lemma}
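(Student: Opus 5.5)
The plan is to compute the context-conditioned next-step distribution at $u$ directly from Definition~\ref{def:in-context_learning}. The softmax over children of $u$ is taken with logits $\ell_0(\cdot\mid x,u)-\eta\, n(u,\cdot;\mathcal C_t)$. The key observation is that, because the reflection signal never returns correct nodes, the correct child $c$ of $u$ guaranteed by Assumption~\ref{assumption:universality} has $n(u,c;\mathcal C_t)=0$. Its logit is therefore unchanged from the base model, whereas $v$'s logit has been lowered by $\eta\, n(u,v;\mathcal C_t)$.

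First, I write
\[
P_t(v\mid u)=\frac{e^{\ell_0(v)-\eta n_v}}{\sum_{a} e^{\ell_0(a)-\eta n_a}}
\le \frac{e^{\ell_0(v)-\eta n_v}}{e^{\ell_0(c)}},
\]
lower-bounding the denominator by the single term of the (unreflected) correct child $c$. If there are several correct children, I sum over all of them. Then $\sum_{c\ \mathrm{correct}} e^{\ell_0(c)} \ge \gamma \sum_a e^{\ell_0(a)}$ by $\gamma$-local support, applied to the base distribution $P_\theta(\cdot\mid x,u)$. I also use $e^{\ell_0(v)}\le \sum_a e^{\ell_0(a)}$. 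Combining these gives
\[
P_t(v\mid u)\le \frac{e^{\ell_0(v)}}{\gamma\sum_a e^{\ell_0(a)}}\,e^{-\eta n_v}\le \frac{1}{\gamma}e^{-\eta n_v}.
\]
Second, I substitute $n_v\ge \frac1\eta\log\frac{M}{\epsilon\gamma}$, which makes $e^{-\eta n_v}\le \epsilon\gamma/M$. Hence $P_t(v\mid u)\le \epsilon/M$.

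The only subtle point, and the one I expect to need care, is justifying that the denominator mass on correct children stays intact. Two things must be checked. First, reflections of other incorrect siblings only decrease their own logits. This shrinks the denominator, but the denominator still contains the correct-children term, and that term alone already suffices for the bound. Second, the support assumption refers to the base (context-free) distribution at $u$, while the in-context score only subtracts counts from reflected transitions. So the correct children's logits are exactly their base logits, and the ratio bound above is legitimate. I would state explicitly that the normalization differs between the base and conditioned softmax. This is why I bound via the ratio $e^{\ell_0(v)}/\sum_{c} e^{\ell_0(c)}$, which is invariant to the normalizer, rather than comparing probabilities directly.
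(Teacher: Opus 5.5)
Your proof is correct and follows essentially the same argument as the paper: write the context-conditioned distribution at $u$ as a reweighted softmax, lower-bound the denominator by the total mass of the never-penalized correct children (at least $\gamma$ by local support), and combine $e^{-\eta n_v}\le \epsilon\gamma/M$ with $q_0(v\mid u)\le 1$. Your unnormalized $e^{\ell_0(\cdot)}$ formulation is the paper's $q_0(\cdot\mid u)\exp(\Delta_t(\cdot))$ expression with the base normalizer left uncancelled, and you rightly sum over all correct children rather than relying on a single one.
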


\begin{proof}
Fix a correct node $u$ and an incorrect child $v$ of $u$. Let
$q_t(\cdot\mid u)$ denote the model's conditional distribution over children of
$u$ at time $t$. Write this distribution in logit form as
\[
q_t(w\mid u)
=
\frac{q_0(w\mid u)\exp(\Delta_t(w))}
{\sum_{z \in \mathrm{Ch}(u)} q_0(z\mid u)\exp(\Delta_t(z))},
\]
where $\Delta_t(w)$ is the cumulative logit change applied to child $w$ up to
time $t$.

By Definition~\ref{def:in-context_learning}, each time $v$ is identified as
incorrect, its logit is decreased by $\eta$. Hence, after
\[
m \ge \frac{1}{\eta}\log\frac{M}{\epsilon\gamma}
\]
such reflections, we have
\[
\Delta_t(v) \le -\eta m
\le -\log\frac{M}{\epsilon\gamma}.
\]
Therefore,
\[
\exp(\Delta_t(v))
\le
\frac{\epsilon\gamma}{M}.
\]

It follows that
\[
q_t(v\mid u)
=
\frac{q_0(v\mid u)\exp(\Delta_t(v))}
{\sum_{z \in \mathrm{Ch}(u)} q_0(z\mid u)\exp(\Delta_t(z))}
\le
\frac{q_0(v\mid u)\frac{\epsilon\gamma}{M}}
{\sum_{z \in \mathrm{Ch}(u)} q_0(z\mid u)\exp(\Delta_t(z))}.
\]

Now restrict the denominator to correct children of $u$. Let
$C_{\mathrm{corr}}(u)$ denote the set of children of $u$ that lie on at least one
correct root-to-leaf path. Since correct children are not returned by the reflection signal,
%\ref{def:reflection},
by the posterior update rule they are not downweighted. Thus for
every $z\in C_{\mathrm{corr}}(u)$ we have $\Delta_t(z)\ge 0$, and therefore
$\exp(\Delta_t(z))\ge 1$. Thus
\[
\sum_{z \in \mathrm{Ch}(u)} q_0(z\mid u)\exp(\Delta_t(z))
\ge
\sum_{z \in C_{\mathrm{corr}}(u)} q_0(z\mid u).
\]

By $\gamma$-local support, because $u$ is a correct node, the initial
conditional mass assigned to correct children of $u$ satisfies
\[
\sum_{z \in C_{\mathrm{corr}}(u)} q_0(z\mid u) \ge \gamma.
\]
Combining the previous inequalities and using $q_0(v\mid u)\le 1$, we obtain
\[
q_t(v\mid u)
\le
\frac{q_0(v\mid u)\frac{\epsilon\gamma}{M}}{\gamma}
\le
\frac{\epsilon}{M}.
\]
This proves the claim.
\end{proof}

We next show that a node with a sufficiently low likelihood will not be sampled in a given number of turns with high probability:

\begin{lemma}\label{lemma:suppressed_edges_do_not_return}
Fix a deterministic horizon $\overline T$. An incorrect edge
$(u,v)$, where $u$ is a correct node and $v$ is an incorrect child of $u$, is
called suppressed if
\[
q_t(v\mid u)\le \frac{\epsilon}{\overline T knW}.
\]
Then with probability at least $1-\epsilon$, no sampled trajectory traverses a
suppressed edge as its first incorrect edge in any of the first $\overline T$
sampling rounds.
\end{lemma}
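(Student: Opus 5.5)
The plan is a union bound over sampling rounds and over the possible suppressed edges a trajectory could use as its first incorrect edge. I would first fix the bad event precisely. At round $t\le\overline T$, the trajectory $\omega_t$ is sampled from $P_\theta(\cdot\mid x,\mathcal C_{t-1},h)$ for some restart prefix $h$. Its first incorrect edge $(u,v)$ has $u$ a correct node. Moreover, $(u,v)$ is generated by sampling $v$ from $q_{t-1}(\cdot\mid u)$: by Assumption~\ref{assumption:reachability} and the definition of the first incorrect edge, the restart prefix $h$ is correct, so it cannot already contain $v$. Hence $(u,v)$ is a freshly sampled transition out of a correct node $u$ lying on the trajectory.

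Next I would bound the per-round probability. Let $E_t$ be the event that the first incorrect edge of $\omega_t$ is suppressed at time $t-1$. Conditioning on $\mathcal C_{t-1}$, the trajectory visits at most $n$ correct nodes $u$ before its first error, namely the correct prefix $y_{1:j}$ for $j<k$. At each such $u$, the probability of stepping to a specific suppressed child is at most $\epsilon/(\overline T knW)$, and $u$ has at most $W$ children. Walking along the path, the event that the first incorrect step at depth $j$ goes to a suppressed child is contained in the event that, when the process is at the correct node $u=y_{1:j}$, it samples a suppressed child of $u$. This has conditional probability at most $W\cdot\epsilon/(\overline T knW)=\epsilon/(\overline T kn)$. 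Summing over the at most $n$ depths gives $\Pr[E_t\mid\mathcal C_{t-1}]\le \epsilon/(\overline T k)\le\epsilon/\overline T$. Summing these conditional probabilities is legitimate because the bound holds pointwise for every history.

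Finally, a union bound over $t=1,\dots,\overline T$ yields $\Pr[\bigcup_t E_t]\le\sum_t\mathbb E[\Pr[E_t\mid\mathcal C_{t-1}]]\le\epsilon$. This applies the tower property round by round, with suppression measured at the time of sampling. The factor $k$ in the threshold is slack here, reserved for its later use in the main theorem.

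The main obstacle is the adaptivity of the suppressed set: which edges are suppressed depends on the random history. I handle this by conditioning on $\mathcal C_{t-1}$ at each round, so the suppressed set is fixed at sampling time, and by invoking the suppression bound only at the moment $v$ is drawn from $u$. A second subtlety is making sure the restart mechanism cannot insert a suppressed first-incorrect edge without sampling it. Assumption~\ref{assumption:reachability}, together with the fact that a restart prefix containing an error would already place the first error inside the prefix, covers this. For a prefix copied from context that contains an unreflected incorrect edge, I would either argue that this edge was itself sampled in an earlier round, where it was already counted, or explicitly restrict the statement to freshly sampled edges.
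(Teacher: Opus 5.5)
Your argument is essentially the paper's. You bound, per round, the probability that the first incorrect edge is suppressed by $\epsilon/\overline T$, by summing $q(v\mid u)$ over suppressed edges leaving correct nodes, and then take a union bound over the $\overline T$ rounds with suppression evaluated given the current history. The paper sums over all at most $knW$ candidate edges and bounds $\Pr_t[\text{reach }u]$ by $1$. You instead group by depth and use that the reach probabilities of nodes at a fixed depth sum to at most $1$. That is why you get $\epsilon/(\overline T k)$ per round and the factor $k$ becomes slack.

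One step in your first paragraph is wrong as stated: Assumption~\ref{assumption:reachability} does not make the restart prefix $h$ correct. It only forbids prefixes extending a \emph{reflected} transition. Definition~\ref{def:reflection} allows the reflection to return a later node, so a trace's earliest error may never be reflected. A prefix containing that error can then be reused, and its first incorrect edge is traversed without being sampled.

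The paper's proof ignores this too: writing the probability as $\Pr_t[\text{reach }u]\,q_t(v\mid u)$ treats every first incorrect edge as freshly drawn. So you lose nothing relative to the paper, but your option (a) needs one extra observation to be rigorous:
\begin{itemize}
\item Logits are only ever decreased, and $v$'s logit at $u$ changes only when $(u,v)$ itself is reflected. Hence for an unreflected child, $q_t(v\mid u)$ is nondecreasing in $t$.
\item Suppose a reused first incorrect edge $(u,v)$ is suppressed at round $t$. Consider the round where it was first, necessarily freshly, sampled. Either it was already suppressed then, so that round's bad event occurred, or it was reflected in between, which Assumption~\ref{assumption:reachability} rules out for reuse.
\item So the union of your events is contained in the union of fresh-sampling bad events, and the bound $\epsilon$ stands.
\end{itemize}
Option (b) alone would give a weaker statement than the charging argument in Lemma~\ref{lemma:path_suppression_finite_horizon} uses, unless you add this same monotonicity observation.
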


\begin{proof}
Let $S$ be the set of correct nodes. Since there are at most $k$ correct
root-to-leaf paths and each has length at most $n$, we have
\[
|S|\le kn.
\]
Each correct node has at most $W$ children, so the number of incorrect children
of correct nodes is at most $knW$.

Fix a round $t\le \overline T$. If the sampled trajectory is incorrect, it has a
first incorrect node. Let $v$ be this first incorrect node and let $u$ be its
parent. Then $u\in S$ and $v$ is an incorrect child of $u$.

For a fixed pair $(u,v)$, the probability that the sampled trajectory traverses
$(u,v)$ as its first incorrect edge is
\[
\Pr_t[\text{reach }u]\cdot q_t(v\mid u).
\]
Where $\Pr_t[\text{reach }u]$ denotes the probability fo reaching $u$ in round $t$. Therefore, the probability that the first incorrect edge is suppressed is at
most
\[
\sum_{u\in S}\sum_{v\in B(u):\, (u,v)\text{ suppressed}}
\Pr_t[\text{reach }u]q_t(v\mid u)
\le
\sum_{u\in S}\sum_{v\in B(u):\, (u,v)\text{ suppressed}}
q_t(v\mid u).
\]
where $B(u)$ denotes the set of incorrect children of $u$. Since each suppressed
edge has conditional probability at most
\[
\frac{\epsilon}{\overline T knW}
\]
and there are at most $knW$ such edges, this probability is at most
\[
knW\cdot \frac{\epsilon}{\overline T knW}
=
\frac{\epsilon}{\overline T}.
\]

Taking a union bound over the first $\overline T$ rounds gives
\[
\Pr\left[
\exists t\le \overline T:
\text{the first incorrect edge sampled at round $t$ is suppressed}
\right]
\le
\overline T\cdot \frac{\epsilon}{\overline T}
=
\epsilon.
\]
Thus, with probability at least $1-\epsilon$, no suppressed edge is sampled as
the first incorrect edge during the first $\overline T$ rounds.
\end{proof}

We now apply to show that after a given number of rounds with first incorrect nodes returned by the reflection, all first incorrect nodes become suppressed and thus only correct solutions are sampled with high probability.

\begin{lemma}\label{lemma:path_suppression_finite_horizon}
Fix a deterministic horizon $\overline T$, and define
\[
L :=
\frac{1}{\eta}\log\frac{\overline T knW}{\epsilon\gamma} .
\]
With probability at least $1-\epsilon$, during the first $\overline T$ rounds,
after at most
\[
N := knW L
\]
incorrect attempts for which the reflection returns the earliest
incorrect node on the sampled trajectory, either a correct solution has already
been sampled, or every future incorrect attempt within the horizon must traverse
a suppressed edge as its first incorrect edge.

Equivalently, on the event from
Lemma~\ref{lemma:suppressed_edges_do_not_return}, after $N$ such successful
earliest-error reflections, no further incorrect trajectory is sampled during
the first $\overline T$ rounds.
\end{lemma}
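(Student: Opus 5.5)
The argument is a pigeonhole count over the finite set of candidate ``first incorrect edges'', combined with Lemma~\ref{lemma:node_suppression_finite_horizon} and Lemma~\ref{lemma:suppressed_edges_do_not_return}. As in the proof of Lemma~\ref{lemma:suppressed_edges_do_not_return}, let $S$ be the set of correct nodes, so $|S|\le kn$. The set $\mathcal E$ of edges $(u,v)$ with $u\in S$ and $v$ an incorrect child of $u$ then has $|\mathcal E|\le knW$. The first incorrect edge of every incorrect trajectory lies in $\mathcal E$. A reflection that returns the earliest incorrect node therefore always returns the endpoint $v$ of some edge in $\mathcal E$, and by Definition~\ref{def:in-context_learning} it decreases the logit of $v$ under its correct parent $u$ by exactly $\eta$.

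The first step is to apply Lemma~\ref{lemma:node_suppression_finite_horizon} with $M=\overline T knW$. Its hypothesis that correct nodes are never returned holds for earliest-error reflections, and the later-error reflections of Definition~\ref{def:reflection} only hit incorrect nodes. Those nodes are either deeper than the first incorrect node, or they are first-incorrect children that only receive extra downweighting. Neither case lowers the logit of any correct child, so the denominator bound $\ge\gamma$ in that lemma still holds. The lemma then gives the following: once the edge $(u,v)$ has been returned $L=\frac1\eta\log\frac{\overline TknW}{\epsilon\gamma}$ times, we have $q_t(v\mid u)\le \epsilon/(\overline TknW)$. Logits only decrease under the protocol, and the correct-child mass in the denominator never shrinks, so this bound persists for all later rounds. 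In other words, the edge stays suppressed.

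The second step is the counting. Each earliest-error reflection is charged to a single edge of $\mathcal E$. An edge can be charged while it is still unsuppressed at most $\lceil L\rceil$ times. So after $N=knWL$ such reflections (up to rounding), at any later time each edge of $\mathcal E$ either is suppressed or has not been hit $L$ times. In the second case its reflections are fewer than $L$, but the total budget is exhausted. The precise claim I would prove is weaker than ``all edges are suppressed''. It is this: if an $(N+1)$-st earliest-error reflection occurred, it would have to land on an edge that was already suppressed at the time of sampling. That in turn means the sampled trajectory traversed a suppressed edge as its first incorrect edge. This gives the first formulation of the statement. Intersecting with the probability-$(1-\epsilon)$ event of Lemma~\ref{lemma:suppressed_edges_do_not_return}, no suppressed first incorrect edge is ever sampled within $\overline T$ rounds, which gives the ``equivalently'' formulation.

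The main obstacle is making the pigeonhole rigorous, since it must show that exhausting the budget really forces every future first incorrect edge to be suppressed. The naive count only gives that no edge receives more than $L$ unsuppressed hits. Future incorrect samples with a non-earliest reflection, or with an edge hit fewer than $L$ times, do not contradict this. To handle this I would state the conclusion relative to earliest-error reflections: every earliest-error reflection beyond the $N$-th must be on a suppressed edge, and so it is ruled out on the good event. Making this airtight requires checking that suppression is monotone in time, which is where the non-decreasing correct-child mass is used. The remaining incorrect attempts that receive only late reflections are deferred to the main theorem's $p_r$ concentration argument, which accounts for the $\frac{1}{p_r^2}\log\frac1\delta$ term.
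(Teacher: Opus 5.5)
Your setup matches the paper's: the edge set $\mathcal E$ with $|\mathcal E|\le knW$, Lemma~\ref{lemma:node_suppression_finite_horizon} applied with $M=\overline T knW$, monotonicity of suppression, and charging each earliest-error reflection to its first incorrect edge. The gap is that you stop at a strictly weaker conclusion and then attribute it to the statement.

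What you actually establish is a cap. On the good event, each edge absorbs at most $\lceil L\rceil$ charges, so at most $N$ earliest-error reflections can ever occur. That does \emph{not} give the first formulation of the lemma. The lemma asserts that after $N$ such reflections \emph{every} future incorrect attempt has a suppressed first incorrect edge, including attempts whose reflection returns a later node. The ``equivalently'' version asserts that no incorrect trajectory at all is sampled afterwards. You explicitly leave attempts with late reflections uncontrolled and defer them to the main theorem, so the lemma as stated is not proved. Your cap would suffice for the concentration step of the main theorem, but not for this lemma.

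The missing idea is that the pigeonhole is tight, and this removes the obstacle you describe. Take $N=knW\lceil L\rceil$; the paper writes $knWL$ and glosses this rounding. On the good event each edge of $\mathcal E$ receives at most $\lceil L\rceil$ charges. Suppose $N$ charges have been made and some edge received fewer than $\lceil L\rceil$. Then the total is at most $(knW-1)\lceil L\rceil+\lceil L\rceil-1<N$, a contradiction. If $|\mathcal E|<knW$, then $N$ charges cannot occur at all and the claim is vacuous.

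Hence once $N$ earliest-error reflections have occurred, every edge of $\mathcal E$ has been returned $\lceil L\rceil$ times and is suppressed. By your monotonicity remark, it stays suppressed. The first incorrect edge of any incorrect trajectory lies in $\mathcal E$, regardless of what the reflection later returns. So every subsequent incorrect attempt would traverse a suppressed first incorrect edge, which the no-return event of Lemma~\ref{lemma:suppressed_edges_do_not_return} excludes.

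Your case of ``an edge hit fewer than $L$ times'' therefore cannot coexist with an exhausted budget. This saturation step is exactly what the paper's proof uses to get the full statement.
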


\begin{proof}
Let $S$ denote the set of correct nodes, i.e. nodes that lie on at least one
correct root-to-leaf path. Since there are at most $k$ correct paths and each has
length at most $n$,
\[
|S|\le kn.
\]

For each correct node $u\in S$, let $B(u)$ denote the set of incorrect children
of $u$. We call an edge $(u,v)$, with $v\in B(u)$, suppressed once
\[
q_t(v\mid u)\le \frac{\epsilon}{\overline T knW}.
\]
Define
\[
L :=
\frac{1}{\eta}\log\frac{\overline T knW}{\epsilon\gamma} .
\]
By Lemma~\ref{lemma:node_suppression_finite_horizon}, for
$M=\overline T knW$, any such edge becomes suppressed after it has been returned
by the reflection signal at least $L$ times.

We work on the high-probability event from
Lemma~\ref{lemma:suppressed_edges_do_not_return}, namely that no sampled
trajectory traverses a suppressed edge as its first incorrect edge during the
first $\overline T$ rounds. This event has probability at least $1-\epsilon$.

Now consider any incorrect sampled trajectory, before a correct solution has
been sampled, for which the reflection signal returns the earliest incorrect
node. Let $v$ be this earliest incorrect node and let $u$ be its parent. Then
$u\in S$ and $v\in B(u)$. We charge this successful earliest-error reflection
to the edge $(u,v)$.

On the high-probability event, every such charge is assigned to an edge that is
not yet suppressed. Indeed, if $(u,v)$ were already suppressed, then the sampled
trajectory would traverse a suppressed edge as its first incorrect edge, which
does not occur on this event.

There are at most $knW$ possible first incorrect edges: each such edge leaves a
correct node, there are at most $kn$ correct nodes, and each correct node has at
most $W$ children.

By Lemma~\ref{lemma:node_suppression_finite_horizon}, any such edge becomes
suppressed after receiving $L$ charges. Hence, on the high-probability event, no
edge can receive more than $L$ charges while continuing to appear as the first
incorrect edge.

Therefore, after at most
\[
knW L
\]
successful earliest-error reflections, every possible first incorrect edge is
suppressed. Indeed, if after $knWL$ such reflections some possible first
incorrect edge had not yet received $L$ charges, then the remaining at most
$knW-1$ possible first incorrect edges would have to account for all the other
charges. Since no edge can receive more than $L$ charges before becoming
suppressed and then disappearing from the charging process, the total number of
charges would be strictly less than $knWL$, a contradiction.

Consequently, after $knWL$ successful earliest-error reflections, any future
incorrect trajectory within the horizon would have to traverse a suppressed edge
as its first incorrect edge. On the high-probability event from
Lemma~\ref{lemma:suppressed_edges_do_not_return}, this cannot happen. Therefore
no further incorrect trajectory is sampled within the horizon unless a correct
solution has already been sampled.

\end{proof}

We now complete the proof of the theorem by accounting for all the rounds where the reflection did not return the first incorrect node.
\begin{proof}
Fix a deterministic horizon $\overline T$, and assume throughout that
$T\le \overline T$. Define
\[
N :=
\frac{k n W }{\eta}
\log\frac{\overline TknW}{\epsilon\gamma}.
\]

Let
\[
\tau_{\mathrm{sol}}
:=
\inf\{t : \text{the trajectory sampled at round }t\text{ is correct}\}
\]
be the first round in which a correct solution is sampled. If
$\tau_{\mathrm{sol}}\le T$, then the theorem is already proved. Hence, in the
remainder of the proof, we analyze the process on the event
$\tau_{\mathrm{sol}}>T$, so that every sampled trajectory up to round $T$ is
incorrect and therefore has a well-defined earliest incorrect node.

By Lemma~\ref{lemma:path_suppression_finite_horizon}, with probability at least
$1-\epsilon$, suppressed edges do not reappear as first incorrect edges during
the first $\overline T$ rounds. On this event, after at most $N$ incorrect
attempts for which the reflection signal returns the earliest incorrect node,
no further incorrect trajectory can be sampled during the first $\overline T$
rounds. Therefore, since $T\le \overline T$, it suffices to show that with
probability at least $1-\delta$, by round $T$ there have been at least $N$
rounds in which the reflection identifies the earliest incorrect node, unless a
correct solution has already been sampled.

\paragraph{Concentration under delayed adaptive reflection.}
We first account for the rounds on which the reflection guarantee
$\mathbb E[Z_t\mid \mathcal C_{t-1}]\ge p_r$ need not yet apply.

Call an incorrect round \emph{eligible} if the earliest incorrect prefix of the
sampled trajectory has already appeared in at least $m_r$ previous failed
trajectories in the history. By the delayed adaptive reflection assumption, on
every eligible incorrect round $t$,
\[
\mathbb E[Z_t\mid \mathcal C_{t-1}] \ge p_r,
\]
where $Z_t$ is the indicator that the reflection signal identifies the earliest
incorrect node.

We claim that there are at most $knWm_r$ ineligible incorrect rounds. Indeed,
group incorrect attempts according to their earliest incorrect prefix. For a
fixed earliest incorrect prefix $v$, at most $m_r$ attempts whose earliest
incorrect prefix is $v$ can be ineligible: after $v$ has appeared in $m_r$
previous failed trajectories, all subsequent failed attempts with earliest
incorrect prefix $v$ are eligible. Moreover, every earliest incorrect prefix is
an incorrect child of a correct node. Since there are at most $kn$ correct nodes
and at most $W$ children per node, the number of possible earliest incorrect
prefixes is at most $knW$. Hence the total number of ineligible incorrect rounds
is at most
\[
T_1 := knWm_r .
\]

It remains to control the number of eligible incorrect rounds needed to obtain
$N$ successful earliest-node reflections. Let
\[
\tau_1,\tau_2,\dots
\]
denote the eligible incorrect rounds, and define
\[
\widetilde Z_j := Z_{\tau_j}.
\]
For every $j$ for which $\tau_j$ is defined, the delayed adaptive reflection
assumption gives
\[
\mathbb E[\widetilde Z_j\mid \mathcal C_{\tau_j-1}] \ge p_r .
\]

Define
\[
D_j
:=
\widetilde Z_j
-
\mathbb E[\widetilde Z_j\mid \mathcal C_{\tau_j-1}],
\qquad
M_R := \sum_{j=1}^R D_j .
\]
and set \(M_0:=0\) and \(\mathcal C_{\tau_0}:=\mathcal C_0\).
By construction:
\[
\mathbb{E}[D_j \mid \mathcal{C}_{\tau_j-1}]
=
\mathbb{E}[\tilde Z_j \mid \mathcal{C}_{\tau_j-1}]
-
\mathbb{E}[\tilde Z_j \mid \mathcal{C}_{\tau_j-1}]
= 0.
\]

Moreover, since
\[
\mathcal C_{\tau_{j-1}}\subseteq \mathcal C_{\tau_j-1},
\]
Where the left hand side is $\tau_{j-1}$ and the right hand side is $\tau_j-1$, the tower property implies
\[
\mathbb E[D_j\mid \mathcal C_{\tau_{j-1}}]
=
\mathbb E\!\left[
\mathbb E[D_j\mid \mathcal C_{\tau_j-1}]
\mid \mathcal C_{\tau_{j-1}}
\right]
=
0.
\]

Also, \(M_j\) is measurable with respect to \(\mathcal C_{\tau_j}\), since it is determined after the first \(j\) eligible incorrect rounds have occurred. Therefore,
\[
\mathbb E[M_j\mid \mathcal C_{\tau_{j-1}}]
=
\mathbb E[M_{j-1}+D_j\mid \mathcal C_{\tau_{j-1}}]
=
M_{j-1}
+
\mathbb E[D_j\mid \mathcal C_{\tau_{j-1}}]
=
M_{j-1}.
\]
Thus \((M_j)\) is a martingale with respect to the filtration
\[
\left(\mathcal C_{\tau_j}\right)_{j\ge 0}.
\]

Since \(\widetilde Z_j\in\{0,1\}\) and
\[
\mathbb E[\widetilde Z_j\mid \mathcal C_{\tau_j-1}]\in[0,1],
\]
we have
\[
|D_j|\le 1
\]
almost surely. Hence
\[
|M_j-M_{j-1}|=|D_j|\le 1.
\]
By Azuma--Hoeffding, for any \(a>0\),

\[
\Pr\!\left[
M_R\le
-a
\right]
\le
\exp\!\left(-\frac{a^2}{2R}\right).
\]

Meaning:
\[
\Pr\!\left[
\sum_{j=1}^R \left(\widetilde Z_j
- 
\mathbb E[\widetilde Z_j\mid \mathcal C_{\tau_j-1}]\right)\le
-a
\right]
\le
\exp\!\left(-\frac{a^2}{2R}\right).
\]

Equivalently:
\[
\Pr\!\left[
\sum_{j=1}^R \widetilde Z_j
\le
\sum_{j=1}^R
\mathbb E[\widetilde Z_j\mid \mathcal C_{\tau_j-1}]
-a
\right]
\le
\exp\!\left(-\frac{a^2}{2R}\right).
\]
Using the definition of the reflection in eligible rounds:
\[
\sum_{j=1}^R
\mathbb E[\widetilde Z_j\mid \mathcal C_{\tau_j-1}]
\ge p_rR
\]
and taking
\[
a=\frac12 p_rR,
\]
we obtain
\[
\Pr\!\left[
\sum_{j=1}^R \widetilde Z_j
\le
\frac12 p_rR
\right]
\le
\exp\!\left(-\frac{p_r^2R}{8}\right).
\]

Therefore, if
\[
R\ge \frac{8}{p_r^2}\log\frac1\delta,
\]
then with probability at least $1-\delta$,
\[
\sum_{j=1}^R \widetilde Z_j \ge \frac12 p_rR.
\]
To guarantee at least $N$ successful earliest-node reflections, it suffices that
\[
\frac12 p_rR \ge N,
\]
or equivalently
\[
R\ge \frac{2N}{p_r}.
\]
Thus accounting for both cases, it is enough to take
\[
R
\ge
\frac{2}{p_r}
\left(
N+\frac{4}{p_r}\log\frac1\delta
\right)
\]
eligible incorrect rounds.

Since there are at most $T_1=knWm_r$ ineligible incorrect rounds, it suffices to
take
\[
T = T_1 + R
\ge
knWm_r
+
\frac{2}{p_r}
\left(
N+\frac{4}{p_r}\log\frac1\delta
\right).
\]
Substituting
\[
N :=
\frac{k n W }{\eta}
\log\frac{\overline TknW}{\epsilon\gamma}
\]
gives the claimed bound.

Combining the concentration event, which fails with probability at most
$\delta$, with the finite-horizon no-return event from
Lemma~\ref{lemma:path_suppression_finite_horizon}, which fails with probability
at most $\epsilon$, we conclude that with probability at least
$1-\delta-\epsilon$, a correct solution is sampled by round $T$.

Substituting $\epsilon,\delta\rightarrow \delta/2$, 
then with probability at least $1-\delta$, a correct solution is sampled by
round $T$, where $T\le \overline T$ and
\[
T
\ge
knWm_r
+
\frac{2}{p_r}
\left(
\frac{k n W }{\eta}
\log\frac{2\overline TknW}{\delta\gamma}
+
\frac{4}{p_r}\log\frac{2}{\delta}
\right).
\]
\[
= knW\left(m_r + \frac{2}{p_r\eta}\log\frac{2\overline TknW}{\delta\gamma}
\right) + \frac{8}{p_r^2}\log\frac{2}{\delta}
\]

\paragraph{From horizon to asymptotic bound.}

It remains to choose the deterministic horizon \(\overline T\) so that the
condition \(T\le \overline T\) used above is self-consistent. From the previous
display, it suffices that
\[
\overline T
\ge
knW\left(
m_r
+
\frac{2}{p_r\eta}
\log\frac{2\overline TknW}{\delta\gamma}
\right)
+
\frac{8}{p_r^2}\log\frac{2}{\delta}.
\]
We use the following elementary claim: for \(A,B,C,x>0\), the inequality
\[
x\ge A+B\log(Cx)
\]
is satisfied whenever
\[
x\ge 2A+2B\log_+(2BC),
\qquad
\log_+(u):=\max\{1,\log u\}.
\]
We prove this claim below in Lemma~\ref{lemma:consistency}.

Apply the claim with
\[
A
:=
knWm_r
+
\frac{8}{p_r^2}\log\frac{2}{\delta},
\qquad
B
:=
\frac{2knW}{p_r\eta},
\qquad
C
:=
\frac{2knW}{\delta\gamma}.
\]
Thus it suffices to take
\[
\overline T
\ge
2knWm_r
+
\frac{16}{p_r^2}\log\frac{2}{\delta}
+
\frac{4knW}{p_r\eta}
\log_+\!\left(
\frac{4knW}{p_r\eta}
\cdot
\frac{2knW}{\delta\gamma}
\right).
\]
For this choice, the condition \(T\le \overline T\) holds, and the previous
finite-horizon argument applies.

Substituting this choice into the logarithmic term gives
\[
\log\frac{2\overline TknW}{\delta\gamma}
=
\log\frac1\gamma
+
O\!\left(
\log(nkW)
+
\log\frac1{p_r}
+
\log\frac1\eta
+
\log m_r
+
\log\log\frac1\delta
+
\log\log\frac1\gamma
\right),
\]
with the convention that logarithms of small terms are absorbed by
\(\log_+\). Therefore, with probability at least \(1-\delta\), a correct
solution is sampled after
\[
T
=
\widetilde O\!\left(
knWm_r
+
\frac{knW}{p_r\eta}\log\frac1\gamma
+
\frac{1}{p_r^2}\log\frac1\delta
\right)
\]
rounds.
\end{proof}

Proof of technical lemma:
\begin{lemma}\label{lemma:consistency}
For \(x,A,B,C>0\), the inequality
\[
x\ge A+B\log(Cx)
\]
is satisfied whenever
\[
x\ge 2A+2B\log_+(2BC),
\qquad
\log_+(u):=\max\{1,\log u\}.
\]
\end{lemma}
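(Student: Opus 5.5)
We need to prove: for $x,A,B,C>0$, if $x\ge 2A+2B\log_+(2BC)$ then $x\ge A+B\log(Cx)$. The plan is to split the target inequality into two halves, $x/2\ge A$ and $x/2\ge B\log(Cx)$, and show each separately. The first half is immediate: since $B>0$ and $\log_+(2BC)\ge 1>0$, the hypothesis gives $x\ge 2A$. The whole task therefore reduces to the second half, $x\ge 2B\log(Cx)$, under the assumption $x\ge x_0:=2B\log_+(2BC)$.

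For the second half, introduce $f(x):=x-2B\log(Cx)$ on $x>0$. Its derivative is $1-2B/x$, so $f$ is nondecreasing on $[2B,\infty)$. Since $\log_+\ge 1$, we have $x_0\ge 2B$. Hence it suffices to check $f(x_0)\ge 0$; monotonicity then extends this to all $x\ge x_0$, and in particular to every $x$ satisfying the hypothesis, because such $x$ also satisfy $x\ge x_0$.

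To verify $f(x_0)\ge 0$, write $L:=\log_+(2BC)$, so that $x_0=2BL$. The inequality $f(x_0)\ge 0$ becomes $2BL\ge 2B\log(2BC\cdot L)$, i.e.\ $L\ge \log(2BC)+\log L$. This is the one step that needs care: $\log L\ge 0$, so it actually works against us, and we must show it is absorbed. Split into two cases.
\begin{itemize}
\item If $2BC\le e$, then $L=1$ and $\log L=0$. The requirement is $1\ge\log(2BC)$, which holds.
\item If $2BC>e$, then $L=\log(2BC)>1$, and the requirement reads $\log L\le 0$, which fails.
\end{itemize}

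So the stated constant is too tight in the second case, and the main obstacle is precisely this $\log L$ term. I would repair it by using the slack from halving. Instead of $x/2\ge B\log(Cx)$, try the stronger split that exploits $x\ge 2A+2BL$ fully: it suffices to show $2BL+2A\ge A+B\log(C(2A+2BL))$ at the minimal point, and then use that the derivative of $x-A-B\log(Cx)$ is positive for $x\ge B$. At the minimal point this needs
\[
A+2BL\ge B\log(2BCL)+B\log\!\bigl(1+A/(BL)\bigr).
\]
Bound $\log(1+A/(BL))\le A/(BL)\le A/B$ (using $L\ge1$), which is absorbed by the $A$ on the left. What remains is $2L\ge \log(2BC)+\log L$, which follows from $L\ge\log(2BC)$ and $L\ge \log L$. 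This version goes through in both cases, so I would present the proof in this form. The monotonicity step requires $x_0\ge B$, which holds since $x_0\ge 2B$.
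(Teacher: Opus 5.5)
Your repaired argument is correct. The function $g(x)=x-A-B\log(Cx)$ is nondecreasing on $[B,\infty)$, and the threshold $x_1=2A+2BL$ lies in that range since $L\ge 1$. At $x_1$ the cross term $B\log\bigl(1+A/(BL)\bigr)\le A$ is absorbed by the spare $A$, which leaves $2L\ge \log(2BC)+\log L$, and that holds. In a write-up, drop the first attempt. It fails because the split $x/2\ge A$, $x/2\ge B\log(Cx)$ is too lossy, not because the lemma's constants are too tight.

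The paper takes a different, calculus-free route. It applies $\log y\le y$ with $y=x/(2B)$ to get $B\log x\le x/2+B\log(2B)$ for every $x>0$. Hence $A+B\log(Cx)\le A+x/2+B\log(2BC)$ holds pointwise, and the hypothesis gives $A+B\log(2BC)\le x/2$. Choosing the scale $2B$ makes the coefficient of $x$ exactly $1/2$. So the paper needs no monotonicity argument, no evaluation at the threshold, and no case split. It also shows that the truncation in $\log_+$ is unnecessary, since $x\ge 2A+2B\log(2BC)$ already suffices.

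Your route genuinely relies on $L\ge 1$: to place $x_1$ in the monotone region, to bound $A/(BL)\le A/B$, and to make $\log L$ meaningful. In exchange, it makes explicit where the factor $2$ is spent, namely on the $\log L$ term that defeated your naive split.
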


\begin{proof}
Using the elementary inequality \(\log y\le y\) for \(y>0\), with
\(y=x/(2B)\), we have
\[
\log\frac{x}{2B}
\le
\frac{x}{2B}.
\]
Equivalently,
\[
\log x
\le
\frac{x}{2B}
+
\log(2B).
\]
Therefore,
\[
A+B\log(Cx)
=
A+B\log C+B\log x
\le
A+\frac{x}{2}+B\log(2BC).
\]
Since
\[
\log(2BC)\le \log_+(2BC),
\]
we further have
\[
A+B\log(Cx)
\le
A+\frac{x}{2}+B\log_+(2BC).
\]
Thus, if
\[
x\ge 2A+2B\log_+(2BC),
\]
then
\[
A+B\log_+(2BC)\le \frac{x}{2},
\]
and hence
\[
A+B\log(Cx)\le x.
\]
This proves the claim.
\end{proof}

\section{Full assumptions for noisy reflection model and proof of Theorem \ref{thm:posterior}}\label{noisy_reflection_oracle_assumptions}

We formalize the assumptions used in Theorem~\ref{thm:posterior}.

\begin{enumerate}
    \item (\textbf{Latent correct trajectory and prior})
    There is a set of candidate trajectories \(\mathcal Y\). A latent variable
    \(y^\star\in\mathcal Y\) denotes the correct trajectory. The initial policy
    \(\pi_0\) induces a prior distribution over trajectories,
    \[
    P_0(y)=\Pr(y^\star=y).
    \]
    Under the autoregressive factorization,
    \[
    P_0(y)=\prod_i P_0(y_i\mid y_{<i}),
    \]
    where \(P_0(\cdot\mid h)\) is parameterized by base logits
    \(\ell_0(\cdot\mid h)\).

    \item (\textbf{Sequential sampling and reflection history})
    At stage \(s=1,\dots,t\), the current policy samples a trajectory
    \[
    y^{(s)}\sim \pi^{(s)}(\cdot\mid \mathcal C_{s-1}).
    \]
    If the sampled trajectory is incorrect, the reflection oracle marks a
    transition
    \[
    (h_s,a_s)=\bigl(y^{(s)}_{<r_s},y^{(s)}_{r_s}\bigr).
    \]
    The reflection history is
    \[
    \mathcal C_s=\{(h_j,a_j)\}_{j=1}^s.
    \]

    \item (\textbf{Noisy reflection})
    The marked transition is truly invalid with probability \(p_r>\tfrac12\).
    Equivalently, if \(A_s\) is the event that \(a_s\) is the correct next step
    after \(h_s\), then
    \[
    \Pr(r_s\mid A_s)=1-p_r,
    \qquad
    \Pr(r_s\mid A_s^c)=p_r.
    \]
    That is, the reflection is more likely to mark \(a_s\) when \(a_s\) is
    invalid than when \(a_s\) is valid.

    \item (\textbf{Prefix locality})
    A reflection on transition \((h_s,a_s)\) affects only the posterior
    next-step distribution after prefix \(h_s\). For every prefix \(h\neq h_s\),
    \[
    P_s(\cdot\mid h)=P_{s-1}(\cdot\mid h).
    \]

    \item (\textbf{No distinction among unmarked alternatives})
    At the marked prefix \(h_s\), the reflection gives no information
    distinguishing between alternatives \(a\neq a_s\). Thus, for any
    \(a,a'\neq a_s\),
    \[
    \frac{P_s(a\mid h_s)}{P_s(a'\mid h_s)}
    =
    \frac{P_{s-1}(a\mid h_s)}{P_{s-1}(a'\mid h_s)}.
    \]

    \item (\textbf{Conditional independence})
    Reflection errors are conditionally independent across stages given the
    latent correct trajectory and sampled trajectories.

    \item (\textbf{Posterior})
    The history \(\mathcal C_s\) induces posterior next-step distributions
    \[
    P_s(a\mid h)
    =
    \Pr(y^\star \text{ takes next step } a \text{ after prefix } h
    \mid \mathcal C_s).
    \]
    These local posteriors induce an autoregressive posterior over trajectories,
    \[
    P_s(y\mid \mathcal C_s)
    =
    \prod_i P_s(y_i\mid y_{<i},\mathcal C_s).
    \]
\end{enumerate}

\begin{proof}
Fix stage \(s\), and let the reflection mark
\[
(h_s,a_s)=\bigl(y^{(s)}_{<r_s},y^{(s)}_{r_s}\bigr).
\]
By prefix locality, for every \(h\neq h_s\),
\[
P_s(\cdot\mid h)=P_{s-1}(\cdot\mid h).
\]

It remains to update the next-step distribution at \(h_s\). Let \(A_s\) denote
the event that \(a_s\) is the correct next step after \(h_s\). The noisy
reflection model gives
\[
\Pr(r_s\mid A_s)=1-p_r,
\qquad
\Pr(r_s\mid A_s^c)=p_r.
\]
Therefore Bayes' rule gives
\[
P_s(a_s\mid h_s)
=
\frac{
(1-p_r)P_{s-1}(a_s\mid h_s)
}{
(1-p_r)P_{s-1}(a_s\mid h_s)
+
p_r\sum_{a\neq a_s}P_{s-1}(a\mid h_s)
}.
\]
For \(a\neq a_s\), the reflection does not distinguish among alternatives, so
their relative probabilities are unchanged. Equivalently,
\[
P_s(a\mid h_s)
\propto
P_{s-1}(a\mid h_s)
\exp(-\eta\mathbf 1\{a=a_s\}),
\qquad
\eta=\log\frac{p_r}{1-p_r}.
\]
Combining this with prefix locality,
\[
P_s(a\mid h)
\propto
P_{s-1}(a\mid h)
\exp(-\eta\mathbf 1\{(h,a)=(h_s,a_s)\}).
\]

Iterating over \(s=1,\dots,t\), and using conditional independence of the
reflection errors across stages, the likelihood contributions multiply:
\[
P_t(a\mid h)
\propto
P_0(a\mid h)
\prod_{s=1}^t
\exp\left(
-\eta \mathbf 1\{(h_s,a_s)=(h,a)\}
\right).
\]

Thus:
\[
P_t(a\mid h)
\propto
P_0(a\mid h)
\exp\left(
-\eta
\sum_{s=1}^t
\mathbf 1\{(h_s,a_s)=(h,a)\}
\right).
\]

By definition of $n_t(h,a)$, we get:
\[
P_t(a\mid h)
\propto
P_0(a\mid h)\exp(-\eta n_t(h,a)).
\]
Thus the corresponding logits satisfy
\[
\ell_t(a\mid h)=\ell_0(a\mid h)-\eta n_t(h,a)
\]
up to an additive constant depending only on \(h\).

The locally updated conditionals induce the trajectory posterior
\[
P_t(y\mid\mathcal C_t)=\prod_i P_t(y_i\mid y_{<i},\mathcal C_t).
\]
For any candidate next-stage policy \(q\),
\[
\mathbb E_{y^\star\sim P_t}[\log q(y^\star\mid\mathcal C_t)]
=
-H(P_t)-\mathrm{KL}(P_t\|q),
\]
which is maximized uniquely at \(q=P_t\). Therefore the optimal stage extension
is the autoregressive policy whose local logits are
\[
\ell_t(a\mid h)=\ell_0(a\mid h)-\eta n_t(h,a).
\]
\end{proof}

\section{Proof of Proposition~\ref{thm:tokenwise_transfer}}\label{proof:PAC}

Fix
\[
T := T(\epsilon/2,n,W),
\]
and let $P^*_{(T)}(\cdot\mid x)$ and $P_{h,(T)}(\cdot\mid x)$ denote the distributions over full search transcripts up to round $T$ induced by $P^*$ and $P_h$, respectively.

By the chain rule for KL divergence,
\begin{align}
\mathbb{E}_{x\sim D}\!\left[
D_{\mathrm{KL}}\!\left(P^*_{(T)}(\cdot\mid x)\,\|\,P_{h,(T)}(\cdot\mid x)\right)
\right]
&=
\mathbb{E}_{x\sim D,\; y\sim P^*_{(T)}(\cdot\mid x)}
\left[
\sum_{i=1}^{|y|}
\log \frac{P^*(y_i\mid x,y_{<i})}{P_h(y_i\mid x,y_{<i})}
\right] \\
&\le
\mathbb{E}_{x\sim D,\; y\sim P^*_{(T)}(\cdot\mid x)}
\left[
\sum_{i=1}^{|y|}
\log \frac{1}{P_h(y_i\mid x,y_{<i})}
\right].
\end{align}
The inequality holds because the entropy term of $P^*$ is nonnegative.

Since every transcript has length at most $T$,
\begin{align}
\mathbb{E}_{x\sim D}\!\left[
D_{\mathrm{KL}}\!\left(P^*_{(T)}(\cdot\mid x)\,\|\,P_{h,(T)}(\cdot\mid x)\right)
\right]
&\le
T\cdot
\mathbb{E}_{x\sim D,\; y\sim P^*_{(T)}(\cdot\mid x)}
\left[
\frac1{|y|}\sum_{i=1}^{|y|}
\log\frac{1}{P_h(y_i\mid x,y_{<i})}
\right] \\
&\le \frac{\delta\epsilon^2}{4}.
\end{align}

Now apply Markov's inequality to the nonnegative random variable
\[
D_{\mathrm{KL}}\!\left(P^*_{(T)}(\cdot\mid x)\,\|\,P_{h,(T)}(\cdot\mid x)\right),
\]
where $x\sim D$. We obtain
\[
\Pr_{x\sim D}\!\left[
D_{\mathrm{KL}}\!\left(P^*_{(T)}(\cdot\mid x)\,\|\,P_{h,(T)}(\cdot\mid x)\right)
>
\frac{\epsilon^2}{4}
\right]
\le \delta.
\]
Therefore, with probability at least $1-\delta$ over $x\sim D$,
\[
D_{\mathrm{KL}}\!\left(P^*_{(T)}(\cdot\mid x)\,\|\,P_{h,(T)}(\cdot\mid x)\right)
\le \frac{\epsilon^2}{4}.
\]

By Pinsker's inequality, on this event,
\[
TV\!\left(P^*_{(T)}(\cdot\mid x),P_{h,(T)}(\cdot\mid x)\right)
\le
\sqrt{\frac{1}{2}D_{\mathrm{KL}}\!\left(P^*_{(T)}(\cdot\mid x)\,\|\,P_{h,(T)}(\cdot\mid x)\right)}
\le \frac{\epsilon}{2\sqrt{2}}
\le \frac{\epsilon}{2}.
\]

Let $A_x$ denote the event that a correct solution is generated by round $T$ on problem $x$. By Theorem~\ref{result:inference_backtracks} with failure parameter $\epsilon/2$,
\[
P^*_{(T)}(A_x\mid x)\ge 1-\frac{\epsilon}{2}.
\]
Moreover, for any event $A$ and distributions $P,Q$,
\[
Q(A)\ge P(A)-TV(P,Q).
\]
Applying this with $A=A_x$, $P=P^*_{(T)}(\cdot\mid x)$, and $Q=P_{h,(T)}(\cdot\mid x)$ yields
\[
P_{h,(T)}(A_x\mid x)
\ge
P^*_{(T)}(A_x\mid x)
-
TV\!\left(P^*_{(T)}(\cdot\mid x),P_{h,(T)}(\cdot\mid x)\right)
\ge
1-\frac{\epsilon}{2}-\frac{\epsilon}{2}
=
1-\epsilon.
\]

Hence, with probability at least $1-\delta$ over $x\sim D$, the pass rate of $P_h$ by round $T(\epsilon/2,n,W)$ is at least $1-\epsilon$.
\qed

\section{Proof of Proposition \ref{res:late_reflections}}\label{proof:late_reflections}

\begin{proof}
We analyze the process using the tree representation of prefixes introduced in Section~\ref{sec:tree_representation}. In this representation, prefixes correspond to nodes, and possible next steps correspond to edges. We will refer to prefixes and their extensions interchangeably as nodes and their children.

Under Definition~\ref{def:in-context_learning}, a reflection on a node $v$ decreases only the logit of $v$ itself; it does not directly alter the logits of any ancestor of $v$. In particular, the probability of selecting a first-step node can change only if that first-step node is itself returned by the reflection signal.

By assumption, until all incorrect nodes at depths greater than $n/2$ have been eliminated, the reflection signal returns only such late nodes. Hence during this phase no node at depth $1$ is ever penalized, and therefore the distribution over first reasoning steps remains unchanged from its initial value. In particular, the probability of selecting a correct first step remains $p_1$ throughout this phase.

It remains to lower bound the duration of this phase. Consider any subtree rooted at an incorrect first-step node. By assumption, every incorrect node at depth $d<n/2$ has at least two children. Therefore, after descending for $n/2$ additional levels, this subtree contains at least $2^{n/2}$ descendants at depth greater than $n/2$ (up to an inconsequential off-by-one depending on indexing). Each failed attempt can eliminate at most one such late incorrect node, since each round produces at most one reflected node. Hence before $2^{n/2}$ failed attempts, at least one late incorrect node remains in each such subtree.

Therefore the reflection process may continue to return only nodes at depth greater than $n/2$ throughout the first $2^{n/2}$ failed attempts, and during all of these rounds the first-step distribution is unchanged. Consequently, the probability of selecting a correct first step remains $p_1$, and in particular is at most $p_1$, even after $2^{n/2}$ failed attempts.
\end{proof}
%%%%%%%%%%%%%%%%%%%%%%%%%%%%%%%%%%%%%%%%%%%%%%%%%%%%%%%%%%%%

\section{Proof of Proposition \ref{res:positive_reflections}}\label{proof:positive_reflections}

\begin{proof}
We analyze the process using the tree representation of prefixes introduced in Section~\ref{sec:tree_representation}. In this representation, prefixes correspond to nodes, and possible next steps correspond to edges. We will refer to prefixes and their extensions interchangeably as nodes and their children.

Let $u$ be the parent of the reflected node $v$, and suppose $u$ is a correct prefix while $v$ is an incorrect child of $u$.

Let $q_t(\cdot \mid u)$ denote the model's conditional distribution at round $t$ over the children of $u$. Under the positive-feedback update, the logit of $v$ is increased by $\eta>0$, while the logits of all other children of $u$ remain unchanged. Therefore
\[
q_{t+1}(v\mid u)
=
\frac{q_t(v\mid u)e^\eta}
{q_t(v\mid u)e^\eta + \sum_{w\neq v} q_t(w\mid u)}
>
q_t(v\mid u).
\]
Equivalently, the total conditional mass assigned to all children other than $v$ strictly decreases:
\[
\sum_{w\neq v} q_{t+1}(w\mid u)
<
\sum_{w\neq v} q_t(w\mid u).
\]

Now let $C(u)$ denote the set of correct children of $u$. Since $v$ is incorrect, we have $v\notin C(u)$. Moreover, by assumption,
\[
\sum_{w\in C(u)} q_t(w\mid u) > 0.
\]
Because only the relative normalization among children of $u$ changes, every child $w\neq v$ has its conditional probability multiplied by the same factor
\[
\frac{1}{q_t(v\mid u)e^\eta + \sum_{w\neq v} q_t(w\mid u)} < 1
\]
relative to its previous mass outside the boosted term. Hence
\[
\sum_{w\in C(u)} q_{t+1}(w\mid u)
<
\sum_{w\in C(u)} q_t(w\mid u).
\]

All branching probabilities above $u$ are unchanged, and all conditional distributions below each child of $u$ are unchanged. Therefore the total probability of sampling a correct complete path through $u$ strictly decreases, while the probability of correct complete paths not passing through $u$ remains unchanged. It follows that the overall probability of sampling a correct solution strictly decreases:
\[
\Pr_{y\sim P_{t+1}(\cdot\mid x)}[y \text{ is correct}]
<
\Pr_{y\sim P_t(\cdot\mid x)}[y \text{ is correct}].
\]
\end{proof}

\section{False Positives in Reflection}\label{sec:false_positives}

We extend the reflection model to allow bounded false positives, where correct prefixes may be incorrectly identified for a bounded number of times.

\begin{assumption}[Bounded false-positive reflection]
\label{assumption:bounded_false_positive_reflection}
Let \(\mathcal{C}_{t-1}\) denote the history up to round \(t-1\). At round \(t\),
suppose the sampled reasoning trace \(\omega_t=(y_1,\dots,y_n)\) is incorrect. Let
\(k\) denote the first invalid transition along the trace: the smallest
index such that the prefix \(y_{1:k}\) does not lie on any correct reasoning
trace.

A reflection signal identifies a transition
$
(y_{<r_t}, y_{r_t})
$
as invalid. Define
\[
Z_t=\mathbf{1}\{r_t=k\}.
\]
We assume that for some \(p_r\in(0,1]\) and \(m_r \ge 0\), if the earliest incorrect prefix
\(y_{1:k}\) was used in at least \(m_r\) previous failed attempts in
\(\mathcal{C}_{t-1}\), then
\[
\mathbb{E}[Z_t\mid \mathcal{C}_{t-1}]\ge p_r.
\]
If $r_t\neq k$, it may return either an incorrect transition after $k$ or a transition before $k$. In addition, for every correct transition $s$, the total number of times $s$ is returned by the reflection signal is at most $d$:
\[
\sum_{t\ge 1}\mathbf{1}\{r_t=s\}\le d.
\]
\end{assumption}

Under this modified reflection signal, in-context search remains efficient, with a degradation that scales with the false-positive budget:

\begin{theorem}[Efficient in-context search under bounded false positives]
Under the assumptions of Theorem~\ref{result:inference_backtracks}, except that the reflection signal satisfies Assumption~\ref{assumption:bounded_false_positive_reflection}, Then for any $\delta\in(0,1)$, if
\[
T
=
\widetilde O\!\left(
knWm_r + 
\frac{knW}{p_r\eta}\left(\log\frac1\gamma + d\right)
+
\frac{1}{p_r^2}\log\frac1\delta
\right)
\]
then with probability at least $1-\delta$, a correct solution is generated by round $T$.
\end{theorem}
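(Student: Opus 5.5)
We follow the architecture of the proof of Theorem~\ref{result:inference_backtracks}: a deterministic horizon $\overline T$, suppression of first incorrect edges, a union bound showing suppressed edges do not reappear, and Azuma--Hoeffding over eligible rounds. False positives enter at exactly one place, Lemma~\ref{lemma:node_suppression_finite_horizon}, which assumed that correct children are never downweighted. We re-prove it allowing each correct transition to be flagged at most $d$ times.

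\textbf{Step 1: modified suppression lemma.} Let $u$ be a correct node and $v$ an incorrect child. Every correct child $z$ of $u$ now satisfies $\Delta_t(z)\ge -\eta d$, so the denominator is at least $\gamma e^{-\eta d}$. Hence after $v$ has been returned $m$ times,
\[
q_t(v\mid u)\le \frac{e^{-\eta m}}{\gamma e^{-\eta d}}=\frac{e^{-\eta(m-d)}}{\gamma}.
\]
It therefore suffices that $m\ge d+\frac1\eta\log\frac{M}{\epsilon\gamma}$. With $M=\overline T knW$, the per-edge charge budget becomes $L' = d+\frac1\eta\log\frac{\overline TknW}{\epsilon\gamma}$. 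A false positive on a correct edge could also downweight an ancestor on the path to $u$. This matters only through the factor $\Pr_t[\text{reach }u]\le 1$, which Lemma~\ref{lemma:suppressed_edges_do_not_return} already bounds crudely, so that lemma carries over unchanged.

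\textbf{Step 2: charging and counting.} Lemma~\ref{lemma:path_suppression_finite_horizon} goes through verbatim with $L'$ in place of $L$. After $N'=knWL'$ earliest-error reflections, no incorrect trajectory is sampled within the horizon. One check is needed: a round whose reflection is a false positive has $Z_t=0$. Such rounds are simply absorbed into the failed Bernoulli trials. The eligibility count $knWm_r$ and the martingale concentration are unaffected, because the guarantee $\mathbb E[Z_t\mid\mathcal C_{t-1}]\ge p_r$ on eligible rounds is assumed as before. Contextual reachability may now forbid restarts from a wrongly flagged correct prefix, but restarts from the empty prefix remain allowed, so the sampling analysis, which never relied on restarts, is intact.

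\textbf{Step 3: putting it together.} The total is
\[
T=knWm_r+\frac{2}{p_r}\Bigl(knW\bigl(d+\tfrac1\eta\log\tfrac{\overline TknW}{\epsilon\gamma}\bigr)+\tfrac4{p_r}\log\tfrac1\delta\Bigr),
\]
and Lemma~\ref{lemma:consistency} resolves the self-consistency $T\le\overline T$ exactly as before, with $A$ enlarged by $2knWd/p_r$. The resulting bound is $knWd/p_r$, which is at most $\frac{knW}{p_r\eta}d$ when $\eta\le 1$. If $\eta>1$, the statement's form $\frac{knW}{p_r\eta}d$ seems to need the $d$ term scaled by $\eta$. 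I would state the bound with $\frac{knW}{p_r}\bigl(\frac1\eta\log\frac1\gamma+d\bigr)$, note that this matches the claim for $\eta\le1$, and expect this bookkeeping to be the main point of care.

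The main obstacle is Step 1. A false positive lowers a correct edge, which could shrink correct-child mass below $\gamma$ and could in principle shift reach probabilities. Bounding the cumulative penalty on each correct edge by $\eta d$ is the cleanest fix, and it costs only the additive $d$ in the required number of charges per edge.
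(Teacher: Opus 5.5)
Your proposal is correct and follows the paper's proof essentially step for step. The only modification is the robust suppression lemma: correct-child mass is bounded below by $\gamma e^{-\eta d}$, which gives the per-edge threshold $L_d=\frac{1}{\eta}\log\frac{M}{\epsilon\gamma}+d$, and the no-return lemma, the charging argument with $N_d=knWL_d$, the $knWm_r$ eligibility count, and the Azuma--Hoeffding step are then reused unchanged. Your bookkeeping caveat is also well founded: the paper's own derivation likewise gives a $d$-term of order $knWd/p_r$ rather than $knWd/(p_r\eta)$, so the displayed form of the bound is literally justified only for $\eta\le 1$ (or with $\eta$ treated as a constant).
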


As can be seen, a dependence of $\tilde O(d)$ is added, coupled to the depth $n$, width $W$, and number of correct solutions $k$. Letting $d$ scale polynomially with $n$ retains the polynomial efficiency.

\textit{proof:}

We analyze the process using the tree representation of prefixes introduced in Section~\ref{sec:tree_representation}, as was done in the proof of the main theorem (Appendix \ref{proof:inference_backtracks}). In this representation, prefixes correspond to nodes, and possible next steps correspond to edges. We will refer to prefixes and their extensions interchangeably as nodes and their children.

The key ingredient is a robust version of the suppression argument that accounts for possible penalties on correct nodes:

\begin{lemma}\label{lemma:node_suppression_finite_horizon_false_positives}
Suppose the reflection signal satisfies
Assumption~\ref{assumption:bounded_false_positive_reflection}. Under
$\gamma$-local support, let $u$ be a correct node and let $v$ be an incorrect
child of $u$. Fix $M\ge 1$ and $\epsilon\in(0,1)$. If $v$ is returned by the
reflection signal at least
\[
L_d
:=
\frac{1}{\eta}\log\frac{M}{\epsilon\gamma}
+d
\]
times, then the conditional probability of sampling $v$ from prefix $u$ is at
most $\epsilon/M$.
\end{lemma}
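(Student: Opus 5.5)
The plan is to repeat the argument of Lemma~\ref{lemma:node_suppression_finite_horizon} and change only one step: the lower bound on the denominator, where correct children of $u$ may now carry a bounded penalty. As there, fix the correct node $u$ and its incorrect child $v$, and write the context-conditioned distribution over children of $u$ in logit form,
\[
q_t(w\mid u)=\frac{q_0(w\mid u)\exp(\Delta_t(w))}{\sum_{z\in\mathrm{Ch}(u)}q_0(z\mid u)\exp(\Delta_t(z))}.
\]
Here $\Delta_t(w)=-\eta\,n(u,w;\mathcal C_t)\le 0$ is the cumulative logit change from Definition~\ref{def:in-context_learning}. The update rule only ever subtracts $\eta$ per reflection, so every $\Delta_t(w)$ is nonpositive.

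\textbf{Numerator.} If $v$ has been returned at least $m\ge L_d$ times, then $\Delta_t(v)\le -\eta m\le -\log\frac{M}{\epsilon\gamma}-\eta d$. This gives
\[
\exp(\Delta_t(v))\le \frac{\epsilon\gamma}{M}e^{-\eta d}.
\]

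\textbf{Denominator.} This is the step that differs from the false-positive-free case. I restrict the sum to the correct children $C_{\mathrm{corr}}(u)$ and drop the other terms, which is valid because they are nonnegative. By Assumption~\ref{assumption:bounded_false_positive_reflection}, each correct transition $(u,z)$ with $z\in C_{\mathrm{corr}}(u)$ is returned by the reflection at most $d$ times over the whole process, so $\Delta_t(z)\ge -\eta d$. Combining this with $\gamma$-local support at the correct node $u$ gives
\[
\sum_{z\in\mathrm{Ch}(u)}q_0(z\mid u)e^{\Delta_t(z)}\ \ge\ e^{-\eta d}\sum_{z\in C_{\mathrm{corr}}(u)}q_0(z\mid u)\ \ge\ \gamma e^{-\eta d}.
\]

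\textbf{Conclusion.} Using $q_0(v\mid u)\le 1$ and dividing the two bounds,
\[
q_t(v\mid u)\le \frac{(\epsilon\gamma/M)e^{-\eta d}}{\gamma e^{-\eta d}}=\frac{\epsilon}{M}.
\]
The factors $e^{-\eta d}$ cancel exactly, and this is precisely why the threshold $L_d$ carries the additive $+d$ term.

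The only point needing care is the denominator bound. The false-positive budget $d$ is per correct transition, not aggregated, so the bound $\Delta_t(z)\ge-\eta d$ holds uniformly for every correct child simultaneously. That uniform bound is exactly what lets me factor $e^{-\eta d}$ out of the sum. Everything else is identical to Lemma~\ref{lemma:node_suppression_finite_horizon}.
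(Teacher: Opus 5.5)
Your proposal is correct and follows the paper's proof essentially step for step. It uses the same numerator bound $\exp(\Delta_t(v))\le \frac{\epsilon\gamma}{M}e^{-\eta d}$, the same restriction of the denominator to correct children using the per-transition budget $\Delta_t(z)\ge -\eta d$ to get $\gamma e^{-\eta d}$, and the same cancellation of $e^{-\eta d}$ together with $q_0(v\mid u)\le 1$.
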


\begin{proof}
Fix a correct node $u$ and an incorrect child $v$ of $u$. Let
$q_t(\cdot\mid u)$ denote the model's conditional distribution over children of
$u$ at time $t$. As before, write
\[
q_t(w\mid u)
=
\frac{q_0(w\mid u)\exp(\Delta_t(w))}
{\sum_{z\in \mathrm{Ch}(u)}q_0(z\mid u)\exp(\Delta_t(z))},
\]
where $\Delta_t(w)$ is the cumulative logit change applied to child $w$ up to
time $t$.

Each time $v$ is returned by the reflection signal, its logit is decreased by
$\eta$. Thus, after $m$ such returns,
\[
\Delta_t(v)\le -\eta m.
\]
If
\[
m
\ge
\frac{1}{\eta}\log\frac{M}{\epsilon\gamma}
+d,
\]
then
\[
\exp(\Delta_t(v))
\le
\exp(-\eta m)
\le
\frac{\epsilon\gamma}{M}e^{-\eta d}.
\]

Now let
\[
C_{\mathrm{corr}}(u)
:=
\{z\in \mathrm{Ch}(u): z \text{ lies on at least one correct root-to-leaf path}\}.
\]
By $\gamma$-local support,
\[
\sum_{z\in C_{\mathrm{corr}}(u)}q_0(z\mid u)\ge \gamma.
\]
Under Assumption~\ref{assumption:bounded_false_positive_reflection}, every
correct transition is returned by the reflection signal at most $d$ times. Hence every
correct child $z\in C_{\mathrm{corr}}(u)$ is downweighted by at most $\eta d$,
so
\[
\Delta_t(z)\ge -\eta d
\qquad\text{and hence}\qquad
\exp(\Delta_t(z))\ge e^{-\eta d}.
\]
Therefore the denominator satisfies
\[
\sum_{z\in \mathrm{Ch}(u)}q_0(z\mid u)\exp(\Delta_t(z))
\ge
\sum_{z\in C_{\mathrm{corr}}(u)}q_0(z\mid u)\exp(\Delta_t(z))
\ge
\gamma e^{-\eta d}.
\]

Combining the numerator and denominator bounds gives
\[
q_t(v\mid u)
\le
\frac{q_0(v\mid u)\frac{\epsilon\gamma}{M}e^{-\eta d}}
{\gamma e^{-\eta d}}
\le
\frac{\epsilon}{M},
\]
since $q_0(v\mid u)\le 1$. This proves the claim.
\end{proof}

With this modification in place, the remainder of the proof follows the proof of
Theorem~\ref{result:inference_backtracks}, with adjusted constants.

\begin{proof}
The proof follows the proof of Theorem~\ref{result:inference_backtracks}, with
the robust node-suppression lemma replacing
Lemma~\ref{lemma:node_suppression_finite_horizon}. The only change is that
bounded false-positive reflections may decrease the logits of correct children.
Since every correct transition is returned by the reflection signal at most \(d\) times,
the logit of every correct child can be decreased by at most \(\eta d\).
Therefore the effective lower bound on correct-continuation mass from any
correct prefix is reduced from \(\gamma\) to
\[
\gamma e^{-\eta d}.
\]

Consequently, every occurrence of
\[
\frac{1}{\eta}\log\frac{M}{\epsilon\gamma}
\]
in the node-suppression argument is replaced by
\[
\frac{1}{\eta}\log\frac{M}{\epsilon\gamma e^{-\eta d}}
=
\frac{1}{\eta}\log\frac{M}{\epsilon\gamma}+d.
\]
In the finite-horizon argument we take \(M=\overline TknW\), so the suppression
threshold becomes
\[
L_d
:=
\frac{1}{\eta}\log\frac{\overline TknW}{\epsilon\gamma}
+d .
\]
Thus the path-suppression lemma gives
\[
N_d := knW L_d
\]
successful earliest-error reflections as a sufficient number to suppress every
possible first incorrect edge within the horizon.

The no-return lemma is unchanged: once an incorrect edge satisfies
\[
q_t(v\mid u)\le \frac{\epsilon}{\overline TknW},
\]
with probability at least \(1-\epsilon\), no sampled trajectory traverses such a
suppressed edge as its first incorrect edge during the first \(\overline T\)
rounds.

The martingale concentration argument is also unchanged, since the event
\[
Z_t=\mathbf{1}\{r_t \text{ is the earliest incorrect node}\}
\]
still satisfies, on every eligible incorrect round,
\[
\mathbb{E}[Z_t\mid \mathcal{C}_{t-1}]\ge p_r.
\]
As in the proof of Theorem~\ref{result:inference_backtracks}, there are at most
\[
T_1 := knWm_r
\]
ineligible incorrect rounds. Hence, with probability at least \(1-\delta\), at
least \(N_d\) successful earliest-error reflections occur by round \(T\) whenever
\[
T
\ge
knWm_r
+
\frac{2}{p_r}
\left(
N_d
+
\frac{4}{p_r}\log\frac{1}{\delta}
\right).
\]

After \(N_d\) successful earliest-error reflections, every possible first
incorrect edge is suppressed. Therefore any later incorrect trajectory within
the horizon would have to traverse a suppressed edge as its first incorrect
edge, which does not occur on the high-probability no-return event. Hence a
correct solution is generated by round \(T\), except with the combined failure
probability from the concentration event and the no-return event.
\end{proof}

\section{Delayed Reflections}\label{sec:delayed_reflections}

We now weaken the reflection model by allowing the reflection to identify errors only up to a bounded delay, while ensuring that reflection signals eventually propagate upward.

For the analysis of this extension, we work with the tree representation of prefixes introduced in Appendix~\ref{sec:tree_representation}, as was done in the proof of the main theorem (Appendix \ref{proof:inference_backtracks}). This representation makes it possible to reason about how reflection feedback propagates across multiple levels of continuation. In this view, nodes correspond to prefixes, edges to transitions and subtrees correspond to sets of continuations.

\begin{assumption}[Bounded useful delay with upward propagation]
\label{assumption:bounded_delay_reflection}
At round \(t\), suppose the sampled trace
\[
\omega_t=(v_0,\dots,v_n)
\]
is incorrect, and let \(v_k\) denote its earliest incorrect node. The reflection
signal returns an invalid node \(r_t\) on the sampled trace.

There exist \(p_r\in(0,1]\), \(m_r\ge 0\), and \(\Delta_r\ge 0\) such that, if
the earliest incorrect node \(v_k\) has appeared in at least \(m_r\) previous
failed attempts in \(\mathcal C_{t-1}\), then
\[
\Pr\!\left[
r_t\in\{v_k,v_{k+1},\dots,v_{\min(k+\Delta_r,n)}\}
\mid \mathcal C_{t-1}
\right]
\ge p_r.
\]
We call such a reflection a \(\Delta_r\)-useful reflection.

Moreover, the reflection signal satisfies upward propagation: if a trace
passes through an incorrect node \(v\), and all children of \(v\) that can be
returned as useful delayed reflections have already been eliminated from the
reflection process, then any future useful delayed reflection on such a
trace returns \(v\) rather than one of those children.
\end{assumption}
To control the behavior of delayed reflections within incorrect subtrees, we assume that all local transitions have non-negligible prior probability:

\begin{assumption}[Uniform local support]
For every edge $(v,u)$ in the reasoning tree,
\[
P_\theta(u \mid v,x) \ge \gamma .
\]
\end{assumption}

Under these assumptions, in-context search remains efficient, with a degradation that depends on the delay parameter:

\begin{theorem}
(Efficient in-context search under bounded-delay reflection)
\label{result:bounded_delay_reflections}
Consider a reasoning problem with maximum reasoning length $n$, branching
factor at most $W$, and at most $k$ correct reasoning traces. Suppose the base
model satisfies $\gamma$-local support and uniform local support with parameter
$\gamma$, the model follows the in-context search protocol, and the reflection
signal satisfies bounded-delay reflection with upward propagation.

Define
\[
B_{\Delta_r}:=\sum_{j=0}^{\Delta_r}W^j .
\]
Then for any $\delta\in(0,1)$, with probability at least $1-\delta$, a correct
solution is sampled after at most
\[
T
=
\widetilde O\!\left(
knWm_r
+
\frac{knW B_{\Delta_r}}{p_r\eta}\log\frac1\gamma
+
\frac{1}{p_r^2}\log\frac1\delta
\right)
\]
rounds.
\end{theorem}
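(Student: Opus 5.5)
The plan is to reduce this to the proof of Theorem~\ref{result:inference_backtracks} (Appendix~\ref{proof:inference_backtracks}). The difference is that reflections now land anywhere in a window of depth $\Delta_r$ below the earliest incorrect node, so the set of nodes that must be suppressed grows. For a correct node $u$ and an incorrect child $v$, let $S(v)$ be the incorrect subtree of $v$ truncated at depth $\Delta_r$ below $v$. It has at most $B_{\Delta_r}=\sum_{j=0}^{\Delta_r}W^j$ nodes. Every $\Delta_r$-useful reflection on a trace whose first incorrect edge is $(u,v)$ returns a node of $S(v)$. We charge it to that node. The charging universe therefore has at most $knW\,B_{\Delta_r}$ elements, replacing the $knW$ first-incorrect edges of Lemma~\ref{lemma:path_suppression_finite_horizon}.

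The core step is a suppression lemma for windows, proved by induction from the bottom of $S(v)$ upward. Consider a node $w\in S(v)$ with parent $w'$. This parent is either $u$ or another incorrect node of the window. We want $q_t(w\mid w')$ to become small after $L=\frac1\eta\log\frac{M}{\epsilon\gamma}$ returns of $w$, with $M=\overline T knW B_{\Delta_r}$. The denominator bound of Lemma~\ref{lemma:node_suppression_finite_horizon} used correct children, which no longer exist below $v$. Instead I would use uniform local support. Before $w'$ itself is returned, the lower bound must come from its not-yet-penalized children. By upward propagation, reflections move to $w'$ only after all its eligible children are eliminated, so at any time some unpenalized child holds mass $\ge\gamma$, or else every child is suppressed.

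The cleaner route is to define ``eliminated'' as reaching the suppression threshold. Suppression of $v$ then needs only $L$ hits on $v$ itself, and the denominator at $u$ is still controlled by $\gamma$-local support exactly as before. The descendants in the window only need enough hits to force, via upward propagation, that subsequent useful reflections return their ancestors. Each node in the window is charged at most $L$ times before it is eliminated and the charges move up, which gives the total count $N=knW B_{\Delta_r}L$. Alongside this I would need a no-return statement in the style of Lemma~\ref{lemma:suppressed_edges_do_not_return}: suppressed window nodes are rarely sampled, via a union bound over $\overline T\cdot knW B_{\Delta_r}$ edges.

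Once $N$ useful reflections suffice, the rest follows the main proof verbatim. There are at most $knWm_r$ ineligible rounds, and the Azuma--Hoeffding bound applies to the indicator of a $\Delta_r$-useful reflection, with conditional mean $\ge p_r$. Lemma~\ref{lemma:consistency} handles the self-consistent horizon, giving $T=\widetilde O\big(knWm_r+\frac{knWB_{\Delta_r}}{p_r\eta}\log\frac1\gamma+\frac1{p_r^2}\log\frac1\delta\big)$.

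The main obstacle is the interplay in the middle step. A node in the window may keep being sampled through one of its unpenalized children, so ``eliminated from the reflection process'' must be given a precise meaning that both triggers upward propagation and holds with high probability. Here I would lean on uniform local support: it gives each child base mass $\ge\gamma$, so a child penalized $L$ times has relative mass $\le\epsilon/M$ compared with any unpenalized sibling. This makes the bottom-up elimination order well defined on the no-return event.
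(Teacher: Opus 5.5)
Your overall architecture matches the paper's. You use a window of at most $B_{\Delta_r}$ nodes below each of the $knW$ possible first incorrect edges. You eliminate bottom-up, using uniform local support inside incorrect subtrees and $\gamma$-local support at the correct parent $u$. You add a no-return union bound over $\overline T\,knWB_{\Delta_r}$ edges, followed by the unchanged Azuma--Hoeffding and self-consistent-horizon steps.

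The gap is in the middle step, exactly where you placed the obstacle, and your ``cleaner route'' does not resolve it. Suppose ``eliminated'' means $q_t(w\mid w')\le\epsilon/M$. The conditionals at $w'$ sum to one and $W\epsilon/M<1$, so at least one child of $w'$ can never be eliminated. Read this way, the hypothesis of upward propagation never holds at an internal window node. The last surviving child keeps conditional probability at least $1-(W-1)\epsilon/M$ however often it is penalized, and every later useful reflection may return it. Charges then never move up to $w'$, nor ultimately to $v$. So ``each node is charged at most $L$ times'' fails, and $N=knWB_{\Delta_r}L$ useful reflections need not suppress $(u,v)$. The dichotomy you use for the denominator, ``some unpenalized child holds mass $\ge\gamma$, or else every child is suppressed,'' is also false. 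Once every child has been returned at least once, there may be neither: two siblings returned alternately stay balanced, and neither is suppressed.

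The missing idea is to trigger upward propagation by \emph{discovery} rather than suppression, as in Lemma~\ref{lemma:one_step_upward_propagation}. While some child $z^\star$ of $w'$ has never been returned, $\Delta_t(z^\star)=0$ and $q_0(z^\star\mid w')\ge\gamma$ keep the denominator at least $\gamma$. Hence any sibling with $L$ returns is suppressed and, on the no-return event, stops absorbing reflections. By pigeonhole, after at most $WL$ child-returning useful reflections every traversable child has been returned at least once. Upward propagation, read as the paper does as triggered by this event, then forces further useful reflections through $w'$ onto $w'$. Recursing with $B_d=1+WB_{d-1}$ gives at most $WB_{d-1}L+WL\le 2B_dL$ useful reflections per depth-$d$ subtree (Lemma~\ref{lemma:delayed_subtree_propagation}). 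Adding the $L$ hits on $v$, which you handle correctly via $\gamma$-local support, gives $3B_{\Delta_r}L$ per first incorrect edge and $N=3knWB_{\Delta_r}L$. This changes only constants in your final bound.
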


Notably, the dependence on the delay parameter $\Delta_r$ is exponential through the factor $B_{\Delta_r}$, reflecting the need to eliminate an entire subtree of depth $\Delta_r$ before error signals propagate upward. This yields a smooth transition from the efficient regime of early error localization to the exponentially inefficient regime where reflections occur only at late stages.

\textit{proof:}

We analyze the process using the tree representation of prefixes introduced in Section~\ref{sec:tree_representation}. In this representation, prefixes correspond to nodes, and possible next steps correspond to edges. We will refer to prefixes and their extensions interchangeably as nodes and their children.

We begin by showing that incorrect nodes within the delayed reflection region can be eliminated from the reflection signal after a sufficient number of reflections.

\begin{lemma}[One-step upward propagation after child discovery]
\label{lemma:one_step_upward_propagation}
Fix a deterministic horizon $\overline T$. Let $v$ be an incorrect node, and
suppose that the reflection signal may return children of $v$ on sampled
trajectories passing through $v$. Define
\[
L :=
\frac{1}{\eta}
\log\frac{\overline T\, M}{\epsilon\gamma},
\]
where $M$ is an upper bound on the number of delayed-region edges considered in
the proof.

Call a child $z$ of $v$ suppressed once
\[
q_t(z\mid v)\le \frac{\epsilon}{\overline T M}.
\]
Assume the no-return event that no sampled trajectory traverses a suppressed
edge in the delayed region during the first $\overline T$ rounds.

Then, after at most $WL$ useful delayed reflections that return children of
$v$, every child of $v$ that can appear on sampled trajectories has been
returned by the reflection signal at least once. Consequently, by upward
propagation, any future useful delayed reflection on a trajectory passing through
$v$ returns $v$ rather than one of its children.
\end{lemma}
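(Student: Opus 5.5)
The argument is a charging argument over the at most $W$ children of $v$. It parallels the proof of Lemma~\ref{lemma:path_suppression_finite_horizon}, but is localized at the single node $v$. We use uniform local support in place of $\gamma$-local support, because $v$ is incorrect and so has no correct children to anchor the normalization.

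First, we establish a suppression threshold for children of $v$. Fix a child $z$ of $v$ and write $q_t(\cdot\mid v)$ in logit form, $q_t(w\mid v)\propto q_0(w\mid v)\exp(\Delta_t(w))$, exactly as in the proof of Lemma~\ref{lemma:node_suppression_finite_horizon}. After $z$ has been returned $L=\frac1\eta\log\frac{\overline T M}{\epsilon\gamma}$ times, its numerator is at most $q_0(z\mid v)\,\epsilon\gamma/(\overline T M)$.

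The denominator needs more care. As long as some other child $z'$ of $v$ that can still appear on sampled trajectories has not been penalized, it contributes at least $q_0(z'\mid v)\ge\gamma$ by uniform local support. This gives $q_t(z\mid v)\le \epsilon/(\overline T M)$, so $z$ becomes suppressed.

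If instead every child has been penalized, we are already in the conclusion's situation, because every child has been returned at least once. Hence every child of $v$ is suppressed after at most $L$ returns, or else the claim holds trivially.

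Next, we charge reflections to children. Each useful delayed reflection returning a child of $v$ is charged to that child. On the assumed no-return event, a suppressed child never lies on a sampled trajectory in the first $\overline T$ rounds, so it can no longer be returned. Each of the at most $W$ children therefore receives at most $L$ charges. It follows that after at most $WL$ such reflections, no child of $v$ that can appear on sampled trajectories is still unreturned and unsuppressed. In particular, every reachable child has been returned at least once, and the reachable children have in fact been eliminated from the reflection process.

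Finally, we invoke the upward-propagation clause of Assumption~\ref{assumption:bounded_delay_reflection}. Since all children of $v$ that can be returned as useful delayed reflections have been eliminated, any future useful delayed reflection on a trace through $v$ returns $v$ itself.

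The main obstacle is the denominator bound. The clean bound $\gamma$ from Lemma~\ref{lemma:node_suppression_finite_horizon} relied on unpenalized correct children, and these do not exist below an incorrect $v$. I would handle this with the dichotomy above: either some reachable sibling is still unpenalized and supplies mass $\ge\gamma$, or the conclusion already holds. A further subtlety is ordering. The last child to be returned might never reach $L$ charges, but this does not matter, because the statement only requires that each child be returned at least once.
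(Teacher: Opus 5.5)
Your argument is essentially the paper's: a still-unreturned child of $v$ anchors the normalizer at mass at least $\gamma$ via uniform local support, so any child returned $L$ times is suppressed and, on the no-return event, stops absorbing reflections, and a pigeonhole count over the at most $W$ children gives the $WL$ bound before upward propagation is invoked. The one thing to fix is your claim that every child receives at most $L$ charges and that all reachable children end up eliminated. This cannot hold, since $q_t(\cdot\mid v)$ sums to one so not every child can be suppressed, and the last child returned may have only a single return. The count should instead be the paper's $(W-1)L<WL$ bound up to the first time every reachable child has been returned, with upward propagation triggered by \emph{returned at least once}, exactly as the statement does.
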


\begin{proof}
Fix an incorrect node $v$. Since the branching factor is at most $W$, the node
$v$ has at most $W$ children.

If all children of $v$ that can appear on sampled trajectories have already been
returned by the reflection signal at least once, then the first conclusion
holds, and the upward-propagation assumption implies the second conclusion.
Thus suppose that at least one such child has not yet been returned. Let
$z^\star$ be a child of $v$ that has not yet been returned by the reflection
signal.

We first show that, as long as such an unreturned child $z^\star$ exists, any
other child $z$ that receives $L$ returns becomes suppressed. Since $z^\star$
has not been returned by the reflection signal, the update rule has not
downweighted the transition from $v$ to $z^\star$. Hence its cumulative logit
change satisfies
\[
\Delta_t(z^\star)\ge 0.
\]
By uniform local support,
\[
q_0(z^\star\mid v)\ge \gamma.
\]
Therefore, in the logit representation,
\[
\sum_{y\in \mathrm{Ch}(v)}q_0(y\mid v)\exp(\Delta_t(y))
\ge
q_0(z^\star\mid v)\exp(\Delta_t(z^\star))
\ge
\gamma.
\]

Now let $z$ be any child of $v$ that has been returned at least $L$ times. Each
time $z$ is returned as incorrect, the update rule decreases its logit by
$\eta$. Hence
\[
\Delta_t(z)\le -\eta L
\le
-\log\frac{\overline T M}{\epsilon\gamma},
\]
and so
\[
\exp(\Delta_t(z))
\le
\frac{\epsilon\gamma}{\overline T M}.
\]
Thus
\[
q_t(z\mid v)
=
\frac{q_0(z\mid v)\exp(\Delta_t(z))}
{\sum_{y\in \mathrm{Ch}(v)}q_0(y\mid v)\exp(\Delta_t(y))}
\le
\frac{q_0(z\mid v)\frac{\epsilon\gamma}{\overline T M}}
{\gamma}
\le
\frac{\epsilon}{\overline T M}.
\]
Therefore $z$ is suppressed.

We now count child-returning useful delayed reflections. Suppose, for
contradiction, that after $WL$ such reflections there is still some child
$z^\star$ of $v$ that has not been returned by the reflection signal. Then all
$WL$ reflections were assigned to children other than $z^\star$. Since there
are at most $W-1$ such children, by the pigeonhole principle some returned
child receives at least $L$ returns and hence becomes suppressed by the argument
above.

More generally, while $z^\star$ remains unreturned, every child other than
$z^\star$ can receive at most $L$ returns before becoming suppressed. On the
no-return event, once a child is suppressed, sampled trajectories do not
traverse the corresponding edge again within the horizon, so that child cannot
continue to be returned by useful delayed reflections. Hence the children other
than $z^\star$ can absorb at most $(W-1)L<WL$ such reflections while
$z^\star$ remains unreturned, contradicting the assumption that $WL$ such
reflections occurred.

Therefore, after at most $WL$ useful delayed reflections that return children
of $v$, every child of $v$ that can appear on sampled trajectories has been
returned by the reflection signal at least once. The upward-propagation
assumption then implies that future useful delayed reflections on trajectories
passing through $v$ return $v$ itself rather than one of its children.
\end{proof}

We next use a recursive argument to exhaust the nodes that can be returned by the reflection to the earliest incorrect steps:

\begin{lemma}[Delayed subtree propagation]
\label{lemma:delayed_subtree_propagation}
Fix a deterministic horizon $\overline T$. For $d\ge 0$, define
\[
B_d := \sum_{j=0}^d W^j .
\]
Let $v$ be an incorrect node, and consider useful delayed reflections on
sampled trajectories passing through $v$ whose returned node lies in the
depth-$d$ subtree rooted at $v$. Let
\[
L :=
\frac{1}{\eta}
\log\frac{\overline T\,M}{\epsilon\gamma} .
\]
Work on the no-return event for suppressed edges in the delayed region during
the first $\overline T$ rounds.

Then, after at most \(2B_dL\) such useful delayed reflections,  every future useful delayed
reflection on a trajectory passing through \(v\) returns \(v\) itself.
\end{lemma}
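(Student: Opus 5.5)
We prove Lemma~\ref{lemma:delayed_subtree_propagation} by induction on the depth $d$, using Lemma~\ref{lemma:one_step_upward_propagation} at every level. Let $R(d)$ denote the largest number of useful delayed reflections, returning nodes in the depth-$d$ subtree of an incorrect node $v$, that can occur before every later useful delayed reflection on a trajectory through $v$ returns $v$ itself. The goal is the recursion
\[
R(d)\le W\,R(d-1)+W L,\qquad R(0)\le L .
\]
Unrolling it gives $R(d)\le L\left(W^d+\sum_{j=1}^{d}W^j\right)\le 2B_dL$. Since $B_d=\sum_{j=0}^dW^j$ and $W^d\le B_d$, the factor $2$ in the statement leaves room for this.

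\emph{Base case $d=0$.} The only node in the region is $v$ itself, so every useful reflection counted returns $v$. The claim holds trivially, or with at most $L$ reflections if we also charge the reflections on $v$ that occur before it is suppressed.

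\emph{Inductive step.} Fix $v$, and split the reflections that land in its depth-$d$ subtree into two kinds.
\begin{itemize}
\item[(a)] Reflections landing strictly inside the depth-$(d-1)$ subtree of some child $z$ of $v$, other than at $z$ itself. By the induction hypothesis applied to $z$ (which is incorrect, being a descendant of an incorrect node), at most $2B_{d-1}L$, or more sharply $R(d-1)$, such reflections occur before all useful reflections on trajectories through $z$ return $z$. There are at most $W$ children, so this kind contributes at most $W\,R(d-1)$.
\item[(b)] Reflections that return a child $z$ of $v$. Lemma~\ref{lemma:one_step_upward_propagation} applies to $v$: after at most $WL$ child-returning useful reflections, every child of $v$ that can still appear on sampled trajectories has been returned at least once. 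Some of these children may have been absorbed only through suppression, which is handled by the no-return event. Once every child's subtree is also exhausted in the sense of the induction hypothesis, upward propagation (Assumption~\ref{assumption:bounded_delay_reflection}) forces every subsequent useful reflection through $v$ to return $v$.
\end{itemize}
Adding the two contributions gives the recursion.

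\emph{Main obstacle.} The difficulty is that the two phases interleave in time. Reflections inside a child's subtree and reflections returning children can alternate, and Lemma~\ref{lemma:one_step_upward_propagation} as stated counts only child-returning reflections. The counting has to be a charging argument rather than a sequential one: each useful reflection is charged to the unique deepest ``active'' ancestor level it lands in. The induction hypothesis for a child $z$ must therefore hold regardless of how reflections are interleaved elsewhere. This works because the counts, the suppression thresholds $L$, and the no-return event all depend only on the cumulative number of returns per edge, not on their ordering.

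A second point needs checking. A child $z$ whose own subtree is not yet exhausted may keep being ``returned'' as a child of $v$, and each such return counts toward $z$'s $L$-budget for suppression. This is consistent with the argument: either $z$ becomes suppressed and disappears under the no-return event, or its subtree is exhausted, after which upward propagation makes it the returned node. In both cases the $WL$ bound on child-returning reflections and the $W R(d-1)$ bound on deeper reflections are unaffected.

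Finally, we take $M$ to be the total number of delayed-region edges, at most $knW\cdot B_{\Delta_r}$. The union bound of Lemma~\ref{lemma:suppressed_edges_do_not_return} then supplies the no-return event simultaneously for all suppressed edges in the region, so the same $L$ works at every level of the induction.
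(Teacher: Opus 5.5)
Your proposal is correct and follows essentially the same route as the paper: induction on $d$, with reflections strictly below the children bounded by $W$ times the child-level bound and child-returning reflections bounded by $WL$ via Lemma~\ref{lemma:one_step_upward_propagation}. Carrying the exact recursion $R(d)\le W R(d-1)+WL$, instead of substituting the lemma's stated bound, is the cleaner bookkeeping. The paper's inductive step invokes the hypothesis as $B_{d-1}L$, but the lemma's own bound $2B_{d-1}L$ would give $(2B_d+W-2)L$, which exceeds $2B_dL$ once $W\ge 3$; your recursion closes the induction without this inconsistency.
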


\begin{proof}
We prove the claim by induction on \(d\), working throughout on the no-return
event for suppressed edges in the delayed region during the first
\(\overline T\) rounds.

\paragraph{Base case: \(d=0\).}
If \(d=0\), the depth-\(0\) subtree rooted at \(v\) contains only \(v\) itself.
Thus every useful delayed reflection whose returned node lies in this subtree
already returns \(v\). Hence the claim holds.

\paragraph{Inductive step.}
Assume the claim holds for depth \(d-1\), and consider an incorrect node \(v\)
with delayed region of depth \(d\).

Let \(z\) be a child of \(v\). The portion of the depth-\(d\) subtree rooted at
\(v\) that lies below \(z\) is contained in the depth-\((d-1)\) subtree rooted
at \(z\). Therefore, by the induction hypothesis, after at most
\[
B_{d-1}L
\]
useful delayed reflections whose returned node lies in the depth-\((d-1)\)
subtree rooted at \(z\), either the edge \((v,z)\) entering \(z\) has already
been suppressed, or future useful delayed reflections on trajectories passing
through \(z\) return \(z\) itself.

On the no-return event, if the edge \((v,z)\) is suppressed, then sampled
trajectories do not traverse this edge again within the horizon. Thus such a
child \(z\) can no longer absorb useful delayed reflections below \(v\).

Consequently, after at most
\[
W B_{d-1}L
\]
useful delayed reflections whose returned node lies strictly below a child of
\(v\), all children \(z\) of \(v\) have been reduced to the following state:
either the edge \((v,z)\) is suppressed, or future useful delayed reflections
on trajectories passing through \(z\) return \(z\) itself.

Now consider the remaining useful delayed reflections through \(v\) that return
children of \(v\) themselves. By Lemma~\ref{lemma:one_step_upward_propagation},
after at most
\[
WL
\]
such child-returning useful delayed reflections, either all children of \(v\)
that can appear on sampled trajectories have been returned at least once, in
which case upward propagation forces future useful delayed reflections through
\(v\) to return \(v\), or a child edge has become suppressed and, on the
no-return event, can no longer be traversed. In either case, after these
additional \(WL\) reflections, no child of \(v\) can continue to absorb useful
delayed reflections below \(v\).

Therefore, after at most
\[
W B_{d-1}L + WL
=
W(B_{d-1}+1)L
\]
useful delayed reflections in the depth-\(d\) subtree rooted at \(v\), useful
delayed reflections can no longer be absorbed by strict descendants of \(v\).
Since \(B_d=1+WB_{d-1}\) and \(W\le B_d\), we have
\[
W(B_{d-1}+1)L
=
(WB_{d-1}+W)L
\le
2B_dL.
\]

At this point, no strict descendant of \(v\) can continue to absorb useful
delayed reflections within the depth-\(d\) delayed region. Therefore, by the
upward-propagation property, every future useful delayed reflection on a
trajectory passing through \(v\) returns \(v\) itself rather than one of its
strict descendants. This proves the induction step.

This proves the induction step and hence the lemma.
\end{proof}

Now we can use the above lemma to reduce to the original reflection definition \ref{def:reflection}:

\begin{lemma}[Reduction to early reflection after delayed propagation]
\label{lemma:bounded_delay_reduces_to_early_reflection}
Fix a deterministic horizon $\overline T$, and define
\[
B_{\Delta_r}:=\sum_{j=0}^{\Delta_r}W^j,
\qquad
L :=
\frac{1}{\eta}
\log\frac{\overline T\,M}{\epsilon\gamma} .
\]
Work on the no-return event for suppressed edges in the delayed region during
the first $\overline T$ rounds.

Let \(v\) be an earliest incorrect node on a sampled trajectory. After at most
\[
2B_{\Delta_r}L
\]
useful delayed reflections on trajectories whose earliest incorrect node is
\(v\), every future useful delayed reflection on such a trajectory returns
\(v\) itself. Consequently, from that point onward, useful delayed reflections
satisfy the early-reflection condition of Definition~\ref{def:reflection} for
the earliest incorrect node \(v\).
\end{lemma}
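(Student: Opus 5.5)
The plan is to obtain this lemma as a direct specialization of Lemma~\ref{lemma:delayed_subtree_propagation} with depth parameter $d=\Delta_r$ and root node $v$ equal to the earliest incorrect node. By Assumption~\ref{assumption:bounded_delay_reflection}, a useful delayed reflection on a trajectory whose earliest incorrect node is $v=v_k$ returns one of $v_k,v_{k+1},\dots,v_{\min(k+\Delta_r,n)}$. Each of these nodes lies on the sampled path below $v$ at depth at most $\Delta_r$, so it belongs to the depth-$\Delta_r$ subtree rooted at $v$. Hence every useful delayed reflection counted in the statement is also counted by Lemma~\ref{lemma:delayed_subtree_propagation}, with the same $L$ and the same no-return event. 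Applying that lemma shows that after at most $2B_{\Delta_r}L$ such reflections, every future useful delayed reflection on a trajectory through $v$ returns $v$ itself.

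Next I will check that the class of trajectories matches. Lemma~\ref{lemma:delayed_subtree_propagation} is stated for trajectories passing through $v$, while here we restrict to trajectories whose \emph{earliest} incorrect node is $v$. The latter is a subset of the former, so the upper bound on the number of reflections carries over, and so does the conclusion that every future useful reflection returns $v$. The ancestors of $v$ on such trajectories are correct, and Assumption~\ref{assumption:bounded_delay_reflection} guarantees that the returned node is invalid. So the returned node can never sit above $v$, and the induction only has to handle the subtree below $v$.

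Finally I will translate this into the condition of Definition~\ref{def:reflection}. Once the propagation phase is complete, on eligible rounds (where $v$ has appeared in at least $m_r$ failed attempts) we have, with conditional probability at least $p_r$, that the reflection is useful. By the previous step it then returns exactly $v_k$, so $Z_t=\mathbf 1\{r_t=k\}=1$ and $\mathbb E[Z_t\mid\mathcal C_{t-1}]\ge p_r$. This is precisely the early-reflection condition for $v$.

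The main obstacle is bookkeeping the ``no-return event'' consistently. The suppression threshold $\epsilon/(\overline T M)$ must use an $M$ that bounds all delayed-region edges reachable from earliest-incorrect nodes, at most $knW\cdot W B_{\Delta_r}$. This ensures that the union bound of Lemma~\ref{lemma:suppressed_edges_do_not_return}, adapted to delayed-region edges and using uniform local support for the denominator, still gives failure probability $\epsilon$. I would fix this choice of $M$ first and then invoke the subtree lemma verbatim.
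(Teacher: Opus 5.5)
Your proposal is correct and matches the paper's proof. The paper also applies Lemma~\ref{lemma:delayed_subtree_propagation} with $d=\Delta_r$ at the earliest incorrect node $v$, notes that every useful delayed reflection on such a trajectory returns a node of the depth-$\Delta_r$ subtree rooted at $v$, and reads off the early-reflection event of Definition~\ref{def:reflection}. Your extra checks are consistent elaborations of steps the paper leaves implicit: the trajectory-class comparison (actually an equality, since every trajectory through $v$ shares its correct ancestors), the explicit $Z_t$ translation on eligible rounds, and the choice of $M$ (which only enters logarithmically).
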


\begin{proof}
Apply Lemma~\ref{lemma:delayed_subtree_propagation} with
\(d=\Delta_r\) to the subtree rooted at \(v\). By definition, a useful delayed
reflection on a trajectory whose earliest incorrect node is \(v\) returns a node
in the depth-\(\Delta_r\) subtree rooted at \(v\). Hence, after at most
\(2B_{\Delta_r}L\) such useful delayed reflections, every future useful delayed
reflection on a trajectory passing through \(v\) returns \(v\) itself rather
than a strict descendant of \(v\).

Since \(v\) is the earliest incorrect node of the trajectory, this is exactly
the early-reflection event from Definition~\ref{def:reflection}. This proves
the claim.
\end{proof}

We now complete the proof of
Theorem~\ref{result:bounded_delay_reflections}. Fix a deterministic horizon
\(\overline T\), and define
\[
B_{\Delta_r}:=\sum_{j=0}^{\Delta_r}W^j.
\]
Let
\[
M := knW B_{\Delta_r}
\]
be an upper bound on the number of delayed-region edges attached to possible
first incorrect edges, and define
\[
L :=
\frac{1}{\eta}
\log\frac{\overline T\,knW B_{\Delta_r}}{\epsilon\gamma} .
\]

On the no-return event for suppressed delayed-region edges, which holds with
probability at least \(1-\epsilon\), Lemma~\ref{lemma:bounded_delay_reduces_to_early_reflection}
shows that each possible earliest incorrect edge requires at most
\[
2B_{\Delta_r}L
\]
useful delayed reflections before useful reflections through that edge return
the earliest incorrect node itself. Once useful reflections return the earliest
incorrect node, the original node-suppression argument applies: after at most an
additional \(L\) returns of that earliest incorrect node, the corresponding
first incorrect edge is suppressed. Since \(B_{\Delta_r}\ge 1\), each possible
first incorrect edge therefore absorbs at most
\[
3B_{\Delta_r}L
\]
useful delayed reflections before being suppressed.

There are at most \(knW\) possible first incorrect edges. Hence, on the
no-return event, after at most
\[
N
:=
3knW B_{\Delta_r}L
\]
useful delayed reflections, all possible first incorrect edges are suppressed.
Consequently, any future incorrect trajectory within the horizon would have to
traverse a suppressed first incorrect edge, which does not occur on the
no-return event. Therefore, after \(N\) useful delayed reflections, either a
correct solution has already been sampled or no further incorrect trajectory is
sampled during the first \(\overline T\) rounds.

It remains to show that \(N\) useful delayed reflections occur quickly. For an
incorrect round \(t\), let \(v_k\) be the earliest incorrect node of the sampled
trajectory, and define
\[
Z_t
:=
\mathbf 1
\left\{
r_t\in
\{v_k,\dots,v_{\min(k+\Delta_r,n)}\}
\right\}.
\]
By the bounded useful-delay reflection assumption, on every eligible incorrect
round,
\[
\mathbb E[Z_t\mid \mathcal C_{t-1}]\ge p_r.
\]

As before, there are at most \(knWm_r\) ineligible incorrect rounds: each
possible earliest incorrect node can appear fewer than \(m_r\) times before it
becomes eligible, and there are at most \(knW\) possible earliest incorrect
nodes.

Applying the same Azuma--Hoeffding argument as in the proof of
Theorem~\ref{result:inference_backtracks}, with this new definition of \(Z_t\),
we obtain that after
\[
R
\ge
\frac{2}{p_r}
\left(
N+\frac{4}{p_r}\log\frac1\delta
\right)
\]
eligible incorrect rounds, at least \(N\) useful delayed reflections have
occurred with probability at least \(1-\delta\).

Therefore it suffices to take
\[
T
\ge
knWm_r
+
\frac{2}{p_r}
\left(
N+\frac{4}{p_r}\log\frac1\delta
\right).
\]
Substituting
\[
N
=
3knW B_{\Delta_r}
\frac{1}{\eta}
\log\frac{\overline T\,knW B_{\Delta_r}}{\epsilon\gamma}
\]
gives
\[
T
=
\widetilde O\!\left(
knWm_r
+
\frac{knW B_{\Delta_r}}{p_r\eta}
\log\frac{\overline T\,knW B_{\Delta_r}}{\epsilon\gamma}
+
\frac{1}{p_r^2}\log\frac1\delta
\right).
\]

Combining the concentration event with the no-return event and taking
\(\epsilon=\delta/2\) gives success probability at least \(1-\delta\). Finally,
the same self-consistency argument used in the proof of
Theorem~\ref{result:inference_backtracks} removes the dependence on
\(\overline T\), since \(\overline T\) appears only logarithmically. Thus, with
probability at least \(1-\delta\), a correct solution is sampled after at most
\[
T
=
\widetilde O\!\left(
knWm_r
+
\frac{knW B_{\Delta_r}}{p_r\eta}\log\frac1\gamma
+
\frac{1}{p_r^2}\log\frac1\delta
\right)
\]
rounds.

\qed

\subsection{Synthetic problems}\label{empirical:synthetic}

We validate the qualitative predictions of our theory in a fully controlled synthetic setting, where the search space, reflection signal, and posterior updates are explicitly implemented. We generate random search trees of fixed width $W$ and varying depth $n$, with edge weights inducing a prior distribution over paths; a single correct solution is chosen by sampling a path from this prior. We implement the posterior update rule of Definition \ref{def:in-context_learning} directly and control the reflection signal to test different feedback regimes.

\paragraph{Efficiency of posterior updates:}
The corollary from Theorem\ref{result:inference_backtracks} predicts that independent sampling from the prior requires exponentially many samples to find the correct path as depth increases, whereas sequential sampling with posterior updates requires only polynomially many attempts. Figure \ref{fig:synthetic_expressivity}(a) confirms this separation: parallel sampling exhibits linear growth in the log domain (exponential complexity), while sequential sampling with posterior updates grows sublinearly.

\paragraph{Failure with late reflections:}Proposition \ref{res:late_reflections} predicts that this advantage disappears if reflection fails to identify early mistakes. Figure 2(b) shows that when reflections are delayed to depths $\geq n/2$, in-context search again exhibits exponential complexity, isolating early error localization as the key mechanism enabling efficiency.
\begin{figure}[h!]
    \centering
    \includegraphics[width=0.8\linewidth]{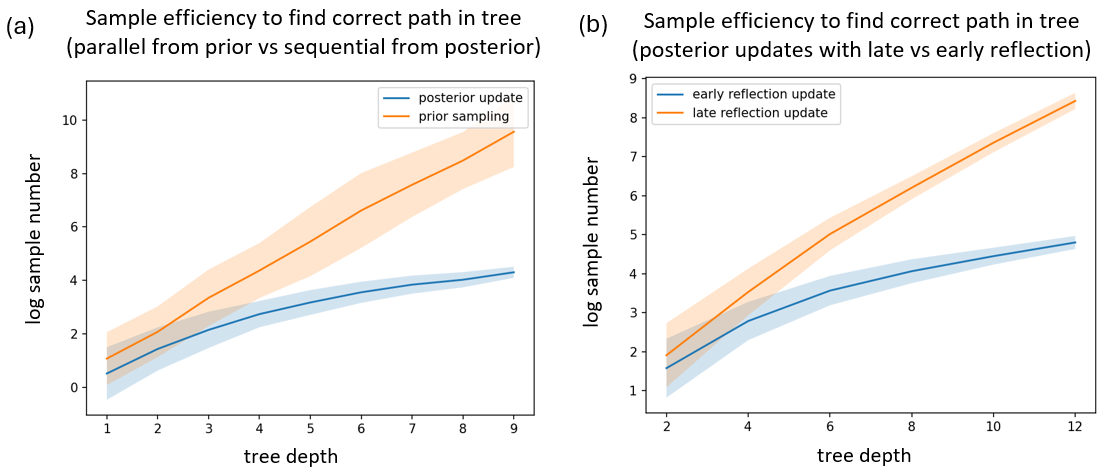}
    \caption{(a) Comparison of sample efficiency to find the correct path in trees of increasing depth, when parallel sampling from the prior vs sequential sampling with posterior updates. (b) Comparison of sample efficiency to find the correct path in trees of increasing depth, in sequential sampling with posterior updates from late mistake reflections vs early mistake reflections.}
    \label{fig:synthetic_expressivity}
\end{figure}

\section{Reasoning Loops as a Failure Mode}\label{sec:loops}

Assumption~\ref{assumption:reachability} prevents the model from repeatedly restarting from prefixes that have already been identified as incorrect.

While this assumption is not strictly necessary for the success of in-context search, removing it permits pathological behaviors in which the model cycles between previously invalidated prefixes without making progress. We refer to such behaviors as \emph{reasoning loops}. These loops do not necessarily prevent success in all cases, but they can significantly degrade efficiency by repeatedly revisiting the same incorrect regions of the search space. Hence we do not expect them to be part of the efficient search regime.

To assess how often this occurs in practice, we analyze failed reasoning trajectories on the AIME 2025 benchmark. For each model, we sample chains-of-thought (CoTs) that do not reach a correct solution and measure the fraction that exhibit a reasoning loop, defined as repeated resumption from a previously invalidated prefix.

\begin{center}
\begin{tabular}{lcc}
\toprule
Model & Greedy & Nucleus \\
\midrule
DeepSeek-R1-Distill-Qwen-2.5-1.5B & 0.83 & 0.48 \\
DeepSeek-R1-Distill-Qwen-2.5-7B   & 0.52 & 0.38 \\
DeepSeek-R1-Distill-Qwen-2.5-14B  & 0.44 & 0.28 \\
\bottomrule
\end{tabular}
\end{center}

We observe that a substantial fraction of failed trajectories exhibit looping behavior, both with greedy decoding and nucleus sampling. 

This supports the relevance of the reachability assumption as a mechanism for ruling out inefficient dynamics. We also observe that larger models exhibit fewer loops, suggesting improved implicit avoidance of previously invalidated prefixes.

Thus, Assumption~\ref{assumption:reachability} should be viewed not as a claim that real models never loop, but as an idealized condition isolating the regime where search is efficient.

\section{Sequential Reflection Rounds}\label{sec:sequential_revisions}

We complement the prefix-conditioned pass-rate analysis with an explicit sequential reflection setting. In each round, the model is prompted to produce a final answer, after which generation is resumed with a self-reflection trigger. Pass rate is then estimated after each revised solution attempt.

To induce self-reflection, we append the beginning-of-CoT token followed by the explicit reflection phrase ``\texttt{<think>Wait}''. To terminate the reflection segment and elicit a new answer, we append ``\texttt{Summing up, final answer</think>}''. Pass rates are estimated from 20 sampled completions conditioned on the resulting prefix. Each answer-generation phase is allocated a 1k-token budget, while reflection phases use either a 100-token or 1k-token budget. This setup is not intended to isolate a pure reflection oracle, since the reflection segment may also perform additional reasoning; rather, it probes whether an explicit self-reflection trigger can substantially change the distribution over final answers.

Figure~\ref{fig:sequential_revisions} shows that sequential reflection can sometimes induce large pass-rate increases, from below $0.05$ (no successes among 20 evaluation samples) to near $1.0$ within a few rounds. At the same time, reflections can also reduce pass rate or leave it unchanged, indicating that the process is not monotonic improvement but redistribution of probability mass across competing continuations. Similar to the prefix-conditioned trajectory analysis in the main text, the resulting dynamics are sparse and highly heterogeneous.

\begin{figure}[h!]
    \centering
    \includegraphics[width=0.9\linewidth]{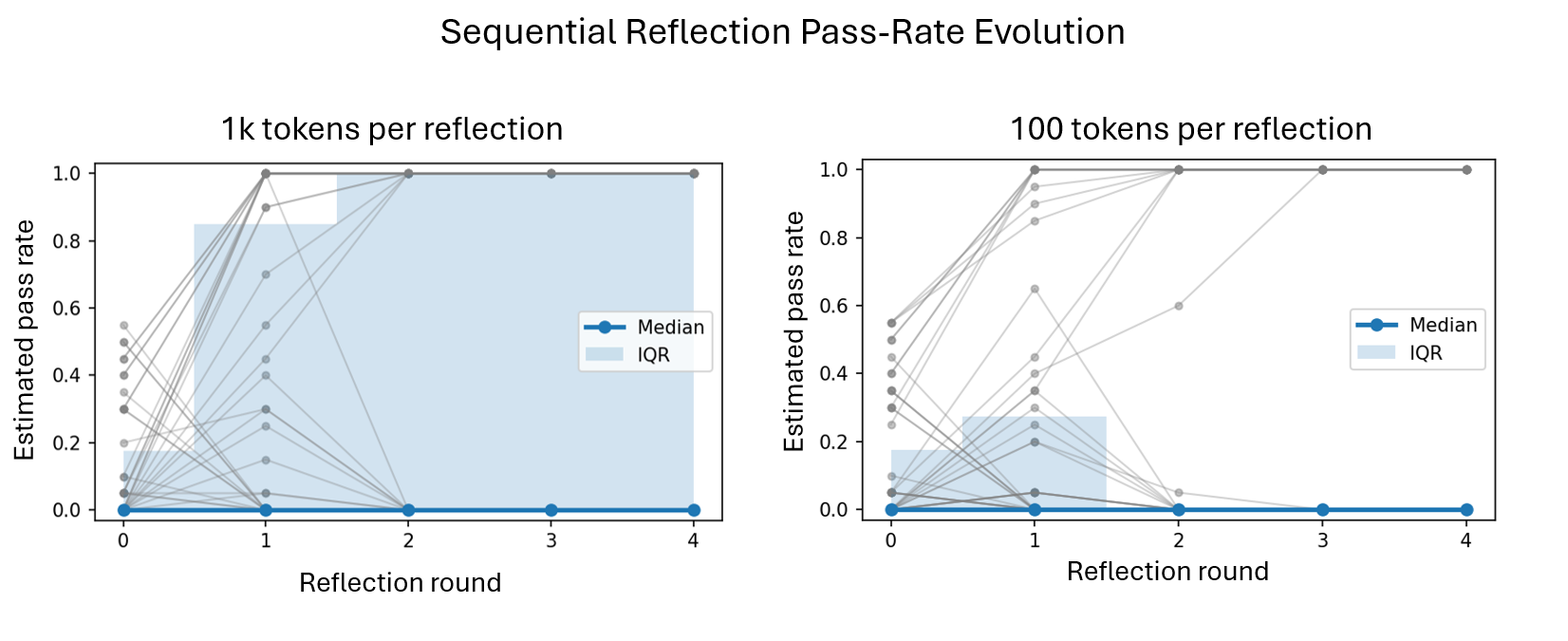}
    \caption{
    Pass-rate evolution under explicit sequential self-reflection rounds on AIME 2025. Gray lines show individual trajectories; blue shows the median with interquartile range. Left: 1k-token reflection budget. Right: 100-token reflection budget. Reflection can sometimes induce large positive shifts in pass rate, while other trajectories remain unchanged or degrade.
    }
    \label{fig:sequential_revisions}
\end{figure}

\section{Evidence in Large Reasoning Models - Results in Complementary Model}\label{sec:complementary-7B}

In \ref{empirical:real}, we provided results for prefix-conditioned pass rates on DeepSeek-R1-Distill-Qwen-2.5-1.5B. We now provide complementary results for DeepSeek-R1-Distill-Qwen-2.5-7B. Figure \ref{fig:spaghetti-7B} shows qualitatively similar results to the $1.5B$ variant, with a larger tail of reflections leading to an increase in pass rate.

\begin{figure}[h!]
    \centering
    \includegraphics[width=0.9\linewidth]{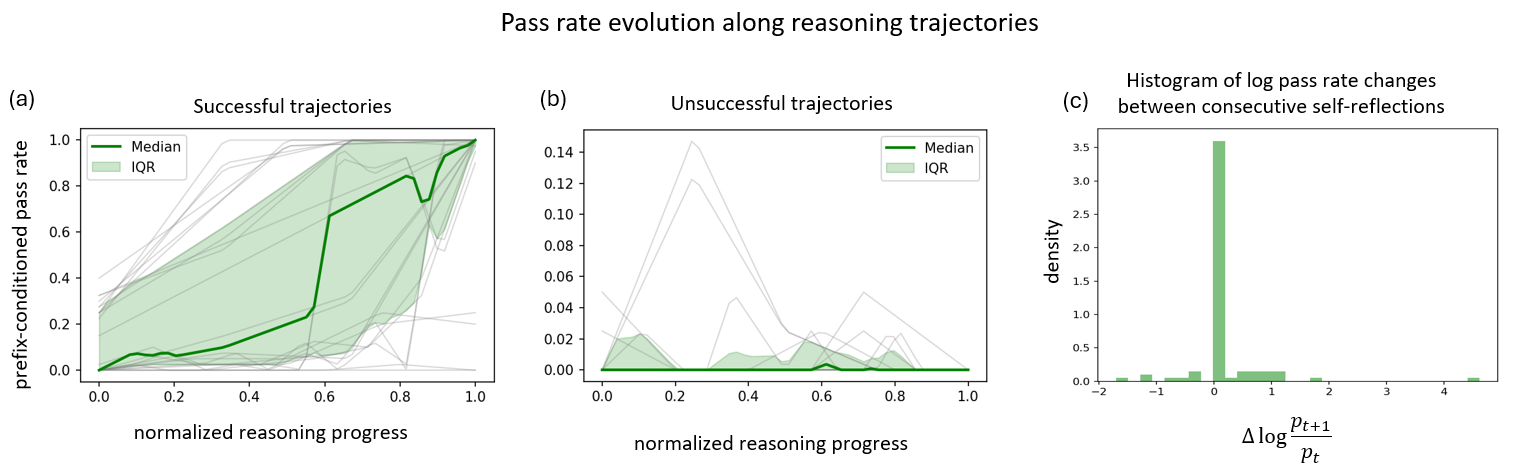}
    \caption{
    Evolution of prefix-conditioned pass rate along reasoning trajectories on AIME 2025 (DeepSeek-R1-Distill-Qwen-2.5-7B), using prefixes sampled at $\sim$1k-token intervals. Gray lines show individual trajectories; green shows the median with interquartile range. (a) Successful trajectories generally increase, with occasional reversals. (b) Unsuccessful trajectories remain low and highly non-monotonic. (c) Reflection-level log changes in pass rate along successful trajectories between consecutive self-correction segments.
    }
    \label{fig:spaghetti-7B}
\end{figure}

\section{Experimental details}

\paragraph{Reflection on arithmetic traces:} We start with a random number, then sample chains via a random choice between the actions $+,-,*,/$, with another random number. A single incorrect calculation is made along the chain, after which correct calculations propagate the error. The model is instructed to find the earliest error.

\paragraph{Reflection on equation solutions:} We start with an uncompressed linear equation written as a sum of $n$ first degree polynomials. Each step merges two of the polynomials into one, reducing to $n-1$ first degree polynomials. Such that after $n$ steps, the equation reduced. A single incorrect calculation is made along the chain, after which correct calculations propagate the error. The model is instructed to find the earliest error.

\paragraph{Evaluating pass rate along a CoT:} We sample full CoTs to AIME 2025 questions from a model (DeepSeek-R1-Distilled-1.5B/7B). The sampling configuration is set to 16k new tokens, using nucleus sampling with default parameters of the model. We then truncate the CoT at evenly spaced positions of $1k$ tokens (rounding up at ends of sentence) along the chain and evaluate the pass rate at those positions - by instructing the model to provide a final answer, followed by the model's end of CoT token $</ think>$, with a budget of up to $1024$ tokens and same configurations, using $100$ generations for the pass rate evaluation. Additionally, for the same evaluation at self-reflection points, we truncate at phrases that imply explicit self-reflection ("Wait").

\paragraph{Evaluation of change in log likelihood of pass rate:} In figure \ref{fig:spaghetti}(c), we show a histogram of change in pass rate log ratio between consecutive self-reflections. As the pass rate is evaluated with $100$ final answer completions, pass rates below $0.01$ are not captured. We thus use a truncated version - $\log\frac{p_t+0.01}{p_{t-1}+0.01}$.

\paragraph{Synthetic trees:} We sampled trees of width $3$ and depth $n$. Posterior updates used $\eta=0.5$.

\paragraph{Compute:} All experiments were performed with up to $4$ $L4$ gpus.

\end{document}